\def\eqref#1{equation~\ref{#1}}
\def\1{\bm{1}}
\DeclareMathAlphabet{\mathsfit}{\encodingdefault}{\sfdefault}{m}{sl}
\SetMathAlphabet{\mathsfit}{bold}{\encodingdefault}{\sfdefault}{bx}{n}
\setlist[itemize]{leftmargin=2em}
\newtheorem{theorem}{Theorem}
\definecolor{darkgreen}{RGB}{1,80,3}
\newcommand{\name}{3D Scaling}
\bfseries\color{blue},
\itshape\color{green!60!black},
\tiny\color{gray},
\title{Extending Test-Time Scaling: A 3D Perspective with Context, Batch, and Turn}
\author{
\textbf{Chao Yu$^{1*}$, Qixin Tan$^{1*}$, Jiaxuan Gao$^{1}$, Shi Yu$^{1}$, Hong Lu$^{1}$,} \\
\textbf{Xinting Yang$^{1}$, Zelai Xu$^{1}$, Yu Wang$^{1\dagger}$, Yi Wu$^{1\dagger}$, Eugene Vinitsky$^{2\dagger}$} \\
$^{1}$Tsinghua University \quad
$^{2}$New York University \\
\texttt{zoeyuchao@gmail.com, yu-wang@tsinghua.edu.cn, jxwuyi@gmail.com, vinitsky.eugene@gmail.com}
}
\date{}
\begin{document}
\maketitle
\makeatletter
\begingroup
\renewcommand\thefootnote{}
\renewcommand\@makefntext[1]{\noindent#1}
\footnotetext{
$^{*}$ Chao Yu and Qixin Tan contributed equally to this work.\\
$^{\dagger}$ Corresponding author.
}
\addtocounter{footnote}{0}
\endgroup
\makeatother

\begin{abstract}
Reasoning reinforcement learning (RL) has recently revealed a new scaling effect: test-time scaling. Thinking models such as R1 and o1 improve their reasoning accuracy at test time as the length of the reasoning \emph{context} increases. However, compared with training-time scaling, test-time scaling is fundamentally limited by the limited context length of base models, which remains orders of magnitude smaller than the amount of tokens 
consumed during training.
We revisit test-time enhancement techniques through the lens of scaling effect and introduce a unified framework of multi-dimensional test-time scaling to \emph{extend} the capacity of test-time reasoning. Beyond conventional context-length scaling, we consider two additional dimensions: \emph{batch scaling}, where accuracy improves with parallel sampling, and \emph{turn scaling}, where iterative self-refinement enhances reasoning quality.
Building on this perspective, we propose 3D test-time scaling, which integrates context, batch, and turn scaling. We show that: (1) each dimension demonstrates a test-time scaling effect, but with a bounded capacity; 
(2) combining all three dimensions substantially improves the reasoning performance of challenging testbeds, including IOI, IMO, and CPHO, and further benefits from human preference feedback; and (3) the human-in-the-loop framework naturally extends to a more open-ended domain, i.e., embodied learning, which enables the design of humanoid control behaviors.
\end{abstract}

\section{Introduction}\label{sec:intro}
Recent progress in reasoning reinforcement learning has introduced a new form of scaling effect by training thinking models such as R1~\citep{guo2025deepseek} and o1~\citep{openai2024o1}. Unlike conventional models that directly map input to output, a thinking model performs intermediate reasoning computation before producing its final answer. A striking phenomenon emerges during the reinforcement learning process: as the model is trained to reason over progressively longer contexts, its reasoning accuracy steadily improves~\citep{shi2025explainingcontextlengthscaling,aggarwal2025l}. At test time, this trend continues: extending the reasoning context length consistently leads to higher accuracy. This phenomenon is referred to as test-time scaling of reasoning models~\citep{muennighoff2025s1simpletesttimescaling}.

However, the potential of test-time scaling is fundamentally constrained by the context window size of current models. Even the most advanced commercial reasoning systems today support fewer than one million tokens of context—negligible compared with the scale of training-time compute, where tens of trillions of tokens are typically consumed during pretraining or post-training. This discrepancy naturally raises a question:
\begin{center}
\textit{How should we extend the capacity of test-time scaling?}
\end{center}

Notably, there have been many popular heuristics to enhance the reasoning model's performance at test time.
For example, majority voting improves accuracy by generating multiple candidate outputs in parallel and selecting the most frequent one~\citep{wang2023self}. Other approaches, such as Reflexion~\citep{shinn2023reflexion} and in-context learning~\citep{madaan2023self}, perform iterative self-refinement, where a model repeatedly revisits and improves its own solutions. Empirically, taking multiple refinement steps leads to a higher accuracy compared with directly outputting the solution. 

In this paper, we revisit these diverse techniques within a unified framework of \emph{multi-dimensional test-time scaling}. Specifically, we consider three dimensions: (1) Context scaling: reasoning accuracy improves with longer thinking context lengths; (2) Batch scaling: methods such as majority vote can be viewed as scaling along a batch dimension, where more parallel samples yield better aggregated answers; (3) Turn scaling: iterative refinement methods correspond to scaling along a turn dimension, where more refinement turns enhance accuracy. Each of these dimensions of scaling interacts with the context-length limits and capabilities of base LLMs, creating unique empirical trade-offs. 

Building on this perspective, we propose \emph{3D test-time scaling}, which integrates all three dimensions: context, batch, and turn. We demonstrate that this unified view substantially extends the ceiling of test-time scaling compute and further enables a human-in-the-loop framework that applies to even open-ended domains.


\begin{itemize}
    \item We establish that each scaling dimension individually exhibits a test-time scaling effect: higher token consumption leads to higher accuracy. However, clear scaling limits can be observed for each dimension.
    \item We show that the unified 3D test-time scaling is capable of leveraging substantially more tokens for improved reasoning and achieving gold-level performances on challenging Olympiad competition problems, such as IMO and CPHO, and approaching competitive performance on IOI.
    The framework also extends to a human-in-the-loop setting, where a human operates along the batch dimension and selects the best candidate to further amplify final accuracy.
    \item Finally, we extend this human-in-the-loop framework to embodied learning, demonstrating that multi-dimensional test-time scaling enables models to interactively design open-ended behaviors in humanoid robot control.
\end{itemize}

\section{Related Work}\label{sec:related}


\paragraph{Scaling Effect.} 
Large language model pretraining has been shown to scale predictably with key training resources, including model size, dataset size, and compute budget ~\citep{kaplan2020scalinglawsneurallanguage,rae2022scalinglanguagemodelsmethods,hoffmann2022trainingcomputeoptimallargelanguage}. With the emergence of thinking models such as DeepSeek-R1~\citep{guo2025deepseek} and OpenAI o1~\citep{openai2024o1}, researchers investigated training-time scaling beyond the number of training tokens. For example, \citet{shi2025explainingcontextlengthscaling} examines scaling behaviors with respect to context length. Scaling laws have also been studied at test-time. \citet{wu2025inferencescalinglawsempirical,snell2024scalingllmtesttimecompute} analyze how performance scales with respect to inference compute under different inference strategies such as majority voting and tree search, as well as tradeoffs between model size and test-time token budgets. In this paper, we focus on test-time scaling and propose a unified framework for characterizing the effects across three dimensions, context scaling, batch scaling, and turn scaling. In contrast, prior work on test-time scaling laws has typically examined only a subset of these aspects.

\paragraph{Test-Time Scaling.} Test-Time Scaling (TTS) refers to the class of algorithms for improving the model's performance through scaling inference-time compute. TTS methods can be broadly categorized into three approaches. 
\textit{Context scaling} methods improve performance through longer output sequences, exemplified by Chain-of-Thought prompting~\citep{wei2023chain}, which elicits step-by-step reasoning in large language models to improve performance on various benchmarks. 
Recent advances in reasoning models like o1~\citep{openai2024o1} and DeepSeek-R1~\citep{guo2025deepseek} further incentivize this ability, highlighting context scaling as an effective strategy for improving test-time performance.
\textit{Batch scaling} approaches leverage parallel computation to explore multiple reasoning paths. Majority voting is a representative technique that leverages the power of parallel sampling~\citep{wang2023self} by generating multiple independent reasoning paths and selecting the majority final answer. 
Other work further incorporates test-time search~\citep{yao2023tree}, Monte-Carlo tree search~\citep{zhang2024rest,xie2024monte}, and parallel thinking~\citep{ning2023skeleton} to improve the performance.
\textit{Turn scaling} methods improve performance through iterative refinement, including Self-Refine~\citep{madaan2023self}, which enables models to iteratively improve outputs through self-feedback without additional training, and Reflexion~\citep{shinn2023reflexion}, which reinforces language agents through linguistic feedback and episodic memory to enhance future decision-making.



\section{Formulation of Test-Time Scaling}\label{sec:prelimary}



\paragraph{LLM Reasoning.} In this work, we focus on LLM reasoning. Given a question $x\in \mathcal X$, the goal is to derive a correct step-by-step solution $y \in \mathcal{Y}$. We assume the existence of a ground-truth verifier $ \mathcal{R}(x,y) $ that evaluates the correctness of a solution $y$ for a question $x$. 
This verifier $\mathcal R$ could have different implementations depending on the specific task in practice. 
For example, in mathematical reasoning tasks where the goal is to derive a single numerical answer, the verifier could return a 0-1 score indicating whether the answer in the solution $y$ matches the ground-truth answer. In coding tasks, the score is determined by the set of unit tests passed by the submitted code in solution $y$.

An LLM $\pi_\theta$ is a policy parameterized by $\theta$. Given an input question $x$, the LLM auto-regressively generates an array of tokens one by one. For a distribution of questions $\mathcal D$, the expected score of an LLM policy $\pi_\theta$ given a question $x$ is defined as,
$$
J(\mathcal D,\pi_\theta)=\mathbb E_{x\sim\mathcal D,y\sim\pi_\theta(\cdot|x)}[\mathcal R(x, y)].
$$

\paragraph{Test-Time Scaling.} Test-time scaling approaches aim to achieve a better score through spending more test-time compute. For instance, context scaling allows the LLM to generate longer responses to conduct in-depth exploration. The efficacy of any test-time scaling method must be evaluated along two key aspects: the expected score and the computational cost. In this work, we quantify computational cost using the theoretical maximum number of tokens generated throughout the inference process.

\begin{figure*}[htbp]
    \centering
    \includegraphics[width=\linewidth]{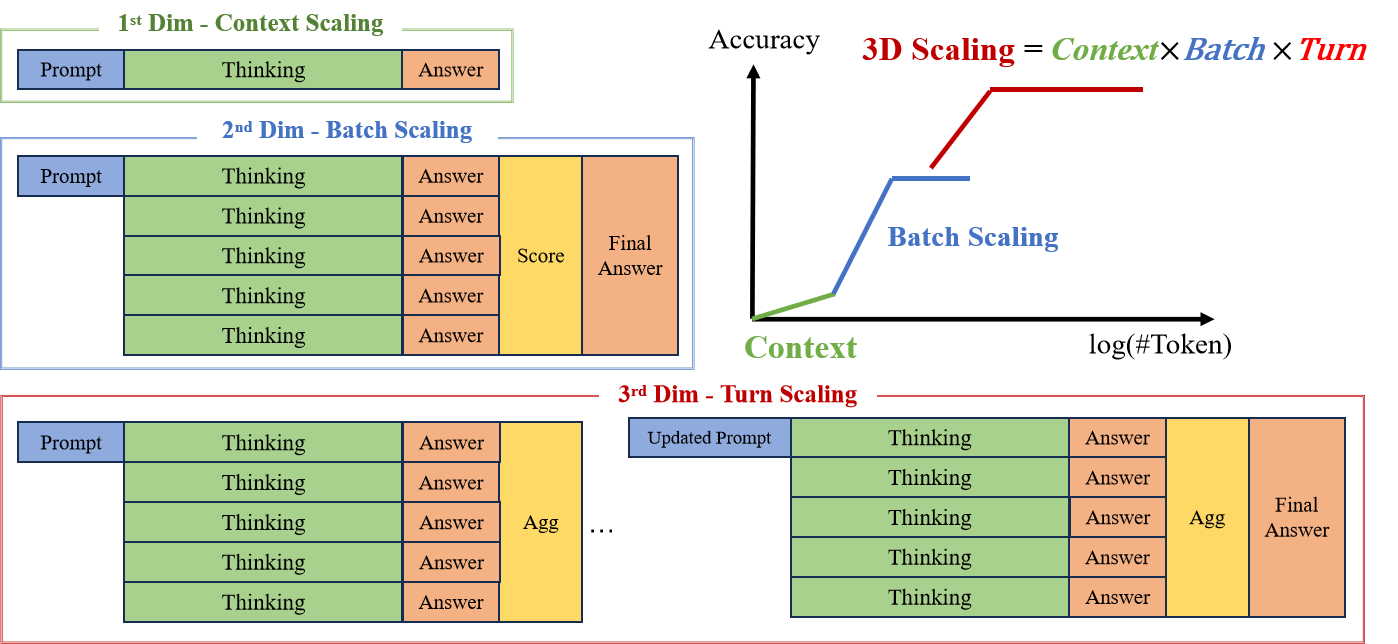}
    \caption{Illustration of Test-time Scaling across three dimensions: context, batch, and turn.
    }
    \vspace{-3mm}
    \label{fig: test-time-scaling}
\end{figure*}

\subsection{Test-Time Scaling with Context, Batch, and Turn}


\subsubsection{Context Scaling}

We first consider scaling along the context dimension that directly controls the response length. In context scaling, the model is explicitly guided to generate a response under an allocated token budget $C$. The reasoning process is conditioned on the prompt $x$ and continues until it reaches an end-of-sequence token or the maximum context length $C$. 
The expected score of context scaling under a context budget $C$ is defined as the expected reward of the full generated response \( y \) within the  context length,
\begin{align}
J_{\texttt{context}}(\mathcal D, \pi_\theta, C) = \mathbb E_{x\sim\mathcal D}[\mathcal R(x, y)], \quad \text{where } y \sim \pi_\theta(\cdot|x) \text{ and } |y| \leq C
\label{eq;context}
\end{align}


\subsubsection{Batch Scaling}

In this section, we investigate another dimension, the batch dimension, that involves generating multiple independent candidate responses and selecting the response that is most likely correct. Generating multiple candidates allows exploring diverse solution paths. Given a batch size $B$ and a per-response context budget $C$, we first generate $B$ responses independently,

\begin{align}
\{y_1, y_2, \cdots, y_B\} \quad \text{where} \quad y_i \sim \pi_\theta(\cdot|x) \quad \text{and} \quad |y_i| \leq C \quad \forall i \in [1, B]
\end{align}

After the set of $B$ responses are generated, the core of batch scaling is an aggregation function \(\mathcal{A}\) that maps the set of \(B\) responses to a single response \(y_{\text{final}}\), i.e. $ y_{\text{final}} = \mathcal{A}(\{y_1, \cdots, y_B\})$. The expected score of batch scaling is,
\begin{align}
J_{\texttt{batch}}(\mathcal D, \pi_\theta, B, C) = \mathbb E_{x\sim\mathcal D}[\mathcal R(x, y_{\text{final}})], \quad \text{where} \quad y_{\text{final}} = \mathcal{A}(\{y_1, \cdots, y_B\})
\end{align}

The choice of the aggregation function \(\mathcal{A}\) is critical and could lead to different practical implementations.

\subparagraph{Choices of Aggregation Function \(\mathcal{A}\).}
The choice of \(\mathcal{A}\) depends on the task structure and the available information. We define two primary strategies,

1.  \textbf{Batch Scaling (Best-of-N):}  
For the Best-of-N strategy, we employ two practical implementations that are suitable for different task configurations,

\begin{itemize}
    \item 
    \textbf{Scoring-based Best-of-N.}  
    We define $S_{\text{task}}(y)$ as a task-specific scoring function that evaluates the response with a scalar value. For example, in programming tasks, the pass rate of unit tests of the code can be directly used as the scoring function.
    The aggregation function then selects the response with the highest score,
    \begin{align}
        \mathcal{A}_{\text{Scoring-based BoN}}(\{y_1, \ldots, y_B\}) 
        = \arg\max_{y \in \{y_1, \ldots, y_B\}} S_{\text{task}}(y)
    \end{align}

    \item 
    
    \textbf{LLM-based Best-of-N.}  
    The LLM is used to directly select the best response among all candidates during aggregation. 
    The aggregation function is expressed as,
    \begin{align}
        \mathcal{A}_{\text{LLM-based BoN}}(\{y_1, \ldots, y_B\}) 
        \sim \pi_\theta(\cdot|[\text{Select the best among }y_1, \ldots, y_B])
    \end{align}
    $\mathcal{A}_{\text{LLM-based BoN}}$ is used when a scoring function for directly evaluating responses is infeasible. For example, in mathematical proof problems, it is often difficult to evaluate the correctness of intermediate reasoning steps of a solution. In such cases, an LLM can perform the evaluation for the whole solution.
\end{itemize}


2.  \textbf{Batch Scaling (Vote):} We also consider majority voting as a representative aggregation strategy when the final answer can be easily extracted from a response \(y_i\) via a deterministic function \(\text{Extract}(y_i)\).  Let $a_i$ be the answer extracted from response $y_i$, i.e. \(a_i = \text{Extract}(y_i)\). The voting function $\mathcal{A}_{\text{Vote}}$ finds the most common answer \(a_{\text{maj}}\) that have the highest frequency, \(a_{\text{maj}} = \arg\max_{a} |\{i : a_i = a\}|\). Formally,
\begin{align}
        \mathcal{A}_{\text{Vote}}(\{y_1, \cdots, y_B\}) = \arg\max_{a} |\{i : a_i = a\}|
\end{align}

Note that \text{Batch Scaling (Vote)} is not applicable when the final answer is complex and equivalence between answers could not be judged efficiently, such as coding tasks. Also, in special problems where the final correctness should be judged based on not only the final answer but also intermediate steps, we employ the LLM to select the best response among all the responses arriving at the most common answer $a_{\text{maj}}$. 

\subsubsection{Turn Scaling}



Besides context and batch scaling, we also investigate scaling along the turn dimension, that allows the LLM to revise its solution sequentially for $T$ turns. At each turn $t$, the LLM generates a new solution $y^t$ under a context length $C$ based on the solution $y^{t-1}$ from the last turn $t-1$. Formally, the expected score of turn-scaling is,
\begin{align}
    J_{\text{turn}}(\mathcal D,\pi_\theta, T, C)=&\mathbb E_{x\sim \mathcal D}[\mathcal R(x,y^T)]\\
    \text{where  }\quad&y^{1}\sim \pi_{\theta}\bigl(\cdot \mid x\bigr) \text{ and }|y^1|\le C\\
    &y^{t+1}\sim \pi_{\theta}\bigl(\cdot \mid [x,\, y^{t}]\bigr)\text{ and }|y^{t+1}|\le C, \quad \forall t = 1, \dots, T-1.
\end{align}
Along the turn dimension, the LLM is able to refine its reasoning by iteratively generating responses conditioned on the previous one. Each iteration allows the LLM to re-evaluate its prior trials and propose a new response, building up on past experiences. This method is called \textbf{Turn-Scaling (Reflection)} in this work. 

\subsubsection{3D Scaling}
\paragraph{3D Scaling Framework.}  
We combine the previous three scaling methods, leading to a unified \textbf{3D scaling framework}, as shown in Fig.~\ref{fig: test-time-scaling}. In 3D scaling, the whole process takes $T$ turns. Initially, the prompt for the first turn is $p_0=x$. Each turn $t$ starts from prompt $p_t$, containing both the original problem and a \emph{context summary} of past experiences, and generates $B$ independent responses within a context length of $C$,
\[
\{y^t_1,y^t_2,\cdots,y^t_{B}\} \quad \text{where} \quad y_i^t \sim \pi_\theta(\cdot \mid p_t)\quad \text{and} \quad |y_i^t| \leq C \quad \forall i \in [1, B]
\]

Similar to batch scaling, an aggregation function $\mathcal{A}(\{y^t_1, \cdots, y^t_B\})$ is used to gather the $B$ responses and generate a context summary in turn $t$. 
The prompt for the next turn, $p_{t+1}$, is then composed by concatenating the input question $x$ and the context summary,
\[
p_{t+1} = [x, \mathcal{A}(\{y^t_1, \cdots, y^t_B\})]
\]

The final solution $y_{\text{final}}$ is then extracted from the aggregated result of the final turn, i.e. $A(\{y_1^T, \ldots, y_B^T\})$.


\paragraph{3D Scaling Implementations.} Different choices of aggregation functions and 3D configurations result in different implementations. We consider two major variants. 
\begin{itemize}
    \item \textbf{3D Scaling (LLM Judge):}   In this setting, a batch of $B>1$ responses are generated in each turn. In each turn, a positive sample and a negative sample are selected as the context summary. To identify the best response, we adopt an LLM-based Best-of-N strategy. Specifically, we input all responses to the LLM and instruct it to return the optimal one. In addition to the optimal response, we also randomly sample a response among the rest as a negative example. This pair of positive and negative examples serves as the aggregated result for the latest turn.
    Formally,
    \begin{align}
        y_{\text{pos}}^t = \pi_\theta(\cdot|[\text{Select the best among }y_1^t, \ldots, y_B^t]),\quad y_{\text{neg}}^t \sim \mathrm{Unif}(\{y_1^t,\ldots,y_B^t\}\setminus\{y_{\text{pos}}^t\}))
    \end{align}
    \begin{align}
        \mathcal{A}_{\text{LLM-Judge}}(\{y_1^t,\ldots,y_B^t\}) 
        = (\,y_{\text{pos}}^t,\; y_{\text{neg}}^t\,).
    \end{align}
    The positive sample provides the best candidate from the previous step, effectively supplying the LLM with a stronger intermediate result to build upon. The negative sample provides a contrastive signal for the LLM to improve in the next turn. 
    Finally, the final solution is extracted as the positive sample from the last turn, i.e., the selected $y_{pos}^T$.

    \item \textbf{3D Scaling (Human Judge):}  
    In this setting, we examine a human-in-the-loop instantiation of 3D scaling. In each turn, human expert feedback serves as the aggregation function. The expert evaluates the batch of model responses and identifies both the most appropriate and the least appropriate responses as the output of the aggregation function. Formally, 
    
    \begin{align}
        y_{\text{pos}}^t, y_{\text{neg}}^t \leftarrow \text{The best and worst responses selected by the human expert}
    \end{align}
    \begin{align}
        \mathcal{A}_{\text{Human-Judge}}(\{y_1^t,\ldots,y_B^t\}) 
        = (\,y_{\text{pos}}^t,\; y_{\text{neg}}^t\,).
    \end{align}

    The final solution is taken as the positive sample chosen in the last turn, i.e., the human-selected $y_{pos}^T$. This approach is particularly effective when the language model is unable to reliably identify the most salient response.

\end{itemize}

We remark that it is also feasible to query the LLM to generate complex feedback for future turns, such as summarizations and reflections over the batch~\citep{shinn2023reflexion,huang2025gemini25procapable}. For simplicity, in this work, we select one or multiple responses as the aggregation result in each turn.  

\section{Experiments}\label{sec:exp_new}


We begin with the experiment setup and then proceed with three evaluation stages. 
First, we examine the performance of different test-time compute configurations over three dimensions on IMO problems to illustrate the test-time scaling phenomena. 
Next, we explore how the unified 3D scaling pushes the reasoning capacity on a collection of challenging Olympiad problems. 
Finally, we extend the framework to a more open-ended setting, embodied learning. With human feedback in the loop, 3D scaling produces robotic control behaviors that are more aligned with human preferences. 

\subsection{Experiment Setup}

\paragraph{Base Reasoning Model:} We conduct all experiments using Gemini 2.5 Pro~\citep{comanici2025gemini25pushingfrontier} as the backbone model, chosen for its strong reasoning and coding capabilities in complex problem-solving tasks. To ensure reproducibility, the temperature is fixed at $0.1$, yielding highly deterministic outputs across trials. For each domain, we further design tailored system prompts for solution generation and feedback learning; full prompt details are provided in Appendix~\ref{app:icpl-system-prompt}.

\textbf{Testbeds:} We explore the scaling effect on two types of testbeds:  

\begin{itemize}
    \item \textbf{Reasoning Problem-Solving Tasks:} This testbed focuses on rigorous reasoning and algorithmic problem-solving. (1) \textit{Math and Physics Olympics:} We adopt problems from the IMO~\citep{IMO2025} and CPHO~\citep{CPHO2022} to evaluate the LLM’s reasoning capabilities. 
    (2) \textit{Coding:} IOI 2025 problems~\citep{IOI2025} are used to assess programming ability under 3D Scaling. Unlike human contestants who receive submission feedback, the LLM must directly solve tasks without intermediate guidance. 
    
    \item \textbf{Innovative Tasks:} This testbed targets embodied AI and emergent behaviors. 
    We use several robotics reinforcement learning tasks from GPU-based IsaacGym~\citep{makoviychuk2021isaacgymhighperformance} that cover diverse environments. We also introduce a new task, \textit{HumanoidJump}, which aims to make a humanoid jump in a human-like manner. Designing a reward for this task is an open challenge because human-like jumping lacks easily quantifiable criteria.
    
\end{itemize}

\paragraph{Evaluation: } For IMO and CPHO problems, every LLM-generated solution is rigorously verified by \emph{human experts} following the scoring guidance. A solution is considered correct only if both the final answer and the entire reasoning process are mathematically valid. For IOI problems, the score is measured over the official IOI test cases. For innovative tasks, we recruit human volunteers to vote for their preferred behaviors. 

\paragraph{Batch Scaling Aggregation Methods:}
\begin{itemize}
    \item In the Batch Scaling setting under the single-turn ($T=1$) configuration for IMO and CPHO tasks, we apply a majority-vote procedure followed by a best-of-all selection. The motivation is that, when the batch size becomes large (e.g., 30 responses), the LLM does not have access to the ground truth and may struggle to reliably identify the single best solution. Majority voting therefore stabilizes the aggregation by filtering out noisy or inconsistent candidates before applying the final selection.

    \item In the Batch Scaling setting on IOI tasks, since the code generated by LLMs can vary significantly and is difficult to vote on, we directly employ the best-of-N strategy. We provide baseline results for both Scoring-based Best-of-N and LLM-based Best-of-N strategies to examine the effectiveness of batch scaling with and without an external ground-truth verifier.
    
    \item In the Batch Scaling setting for the multi-turn task (3D scaling), we rely on LLM-based selection to identify the best response at each turn. Because each round generates only five candidate solutions, LLM choosing remains both feasible and efficient. 

\end{itemize}

\paragraph{Human-in-the-loop Feedback:} In the setting of 3D scaling with $B>1$, in addition to using an LLM judge, we can also introduce a human judge to select the best solution among all parallel candidates in each refinement turn according to the task objective. 
Details about evaluators are presented in Appendix~\ref{app:human-evaluation}.

\subsection{Performance Analysis of Test-Time Scaling on Each dimension}

In this subsection, we study test-time scaling on the IMO benchmark. We select three moderately difficult problems (1, 3, and 5), excluding those that are too easy or too hard. Each is tested over five trials, and we report the average accuracy over 3 problems. To fully exploit the backbone LLM, all experiments except \textbf{Context Scaling} fix the context length at 32k.

\subsubsection{Single-Dimension Scaling Analysis}

We investigate the test accuracy by scaling along each of the three dimensions. 
For \textbf{Context Scaling}, we vary context length $C$ from 1k to 32k. For \textbf{Batch Scaling (Vote)}, we take the 32k context length with $B$ parallel rollouts ranging from 1 to 30.
For \textbf{Turn Scaling (Reflection)}, we adopt full context and $B=1$ while allowing the model to take 1 to 10 refinement turns.

Fig.~\ref{fig: individual-scaling} reports the average accuracy as a function of total thinking budget across the three individual scaling dimensions. Performance improves at small scales but quickly plateaus, with little or no gain from further scaling. In particular, extending the context length beyond a moderate range yields little improvement, and increasing the number of turns offers diminishing returns. Notably, performance under batch scaling even \textbf{degrades} at large $B$ (e.g., $B=30$), suggesting that naive aggregation may not always help. 

We hypothesize that this degradation arises from systematic biases in the model’s reasoning process: when the model consistently favors a specific incorrect derivation pattern, majority voting may amplify the bias instead of correcting it. 

\begin{figure}
	\centering
 \includegraphics[width=0.8\textwidth]{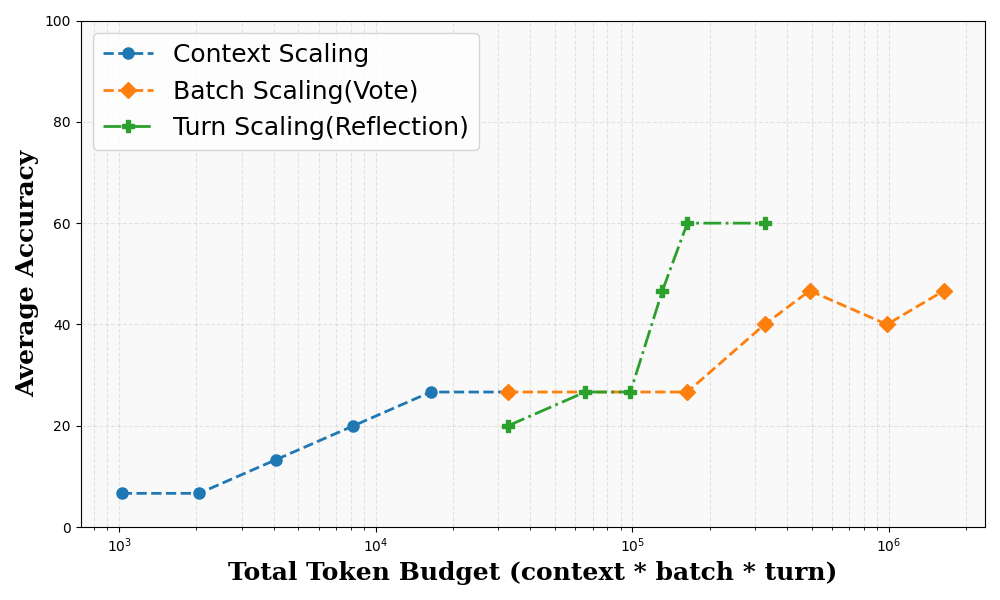}
	\centering 
	\vspace{-2mm}
    \caption{The average accuracy over the IMO2025 dataset as a function of the total thinking budget for individual scaling on three dimensions: context, batch and turn. All three scaling methods achieve substantial improvements at small scales but saturate as the scale becomes larger.}
    \label{fig: individual-scaling}
\end{figure}

\subsubsection{Additional Analysis: Accuracy Dropping in Majority Vote}

\label{sec: major-vote-analysis}

\begin{figure}[t]
    \centering
    \begin{subfigure}[t]{0.48\textwidth}
        \centering
        \includegraphics[width=\linewidth]{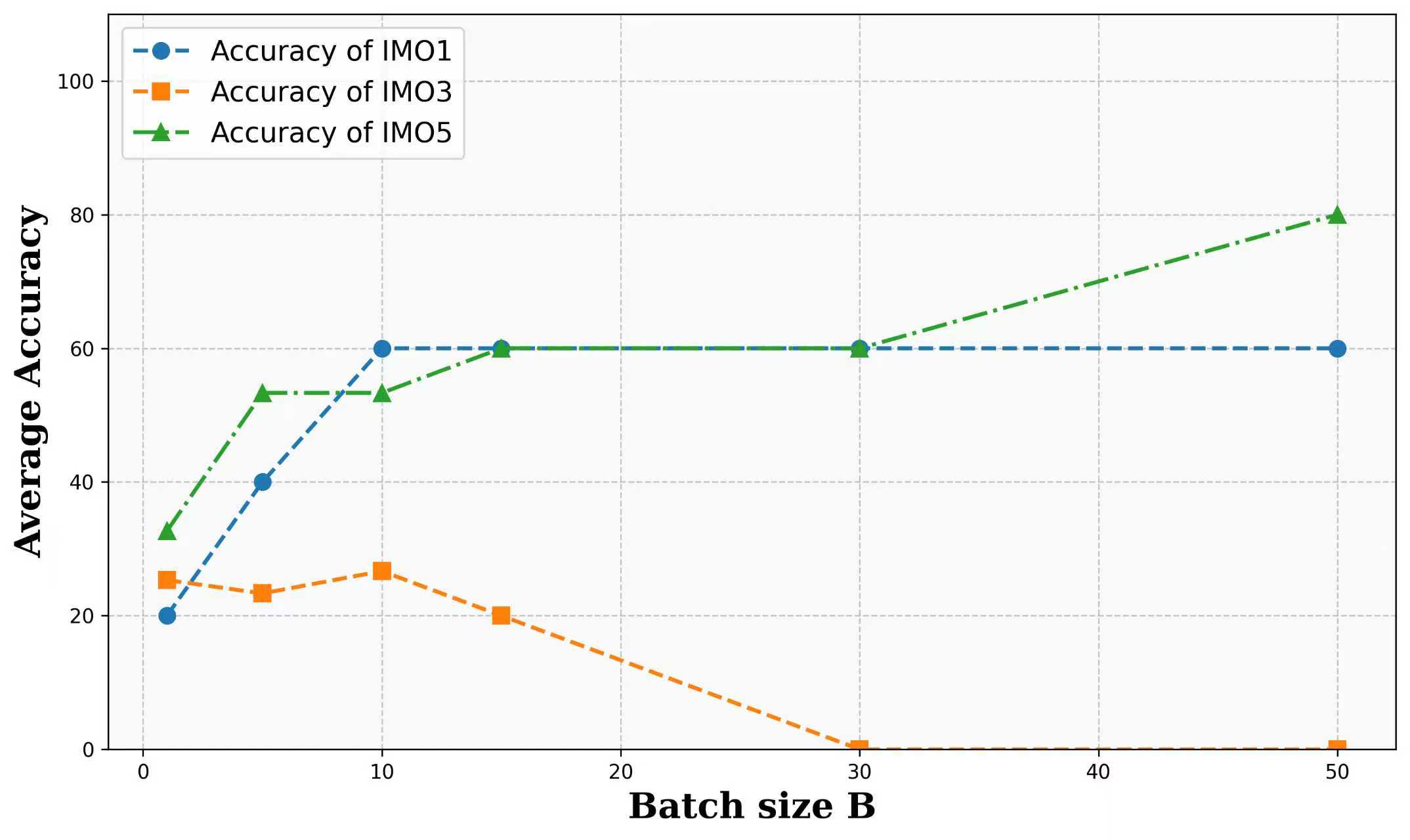}
        \caption{Average accuracy}
        \label{fig: imo-problems}
    \end{subfigure}
    \hfill
    \begin{subfigure}[t]{0.48\textwidth}
        \centering
        \includegraphics[width=\linewidth]{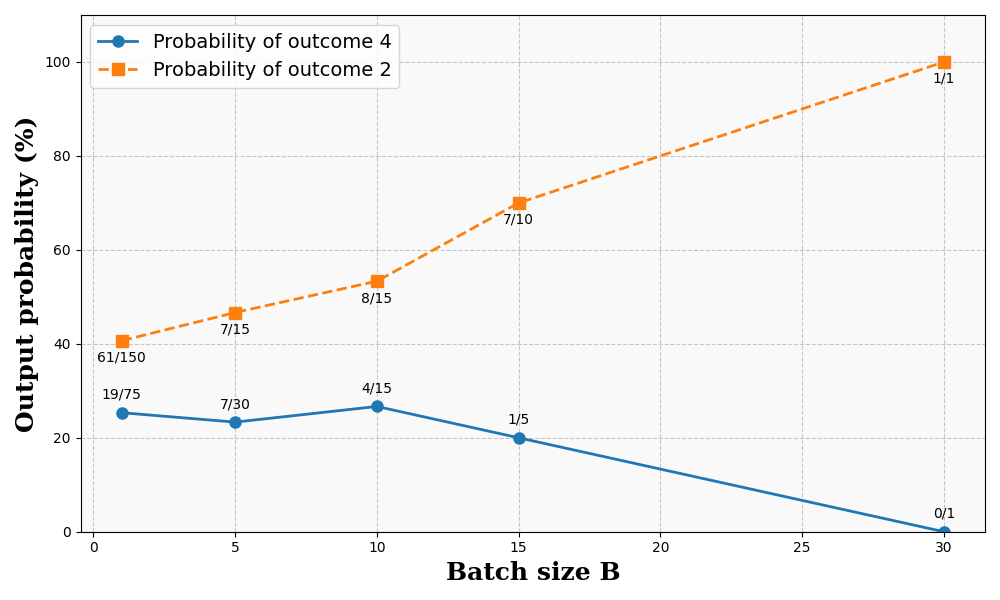}
        \caption{The failure mode of majority vote in IMO3}
        \label{fig: anomaly}
    \end{subfigure}
    \caption{
        Batch scaling analysis on IMO problems.
        (a) shows the average accuracy of batch scaling (majority vote) on each IMO problem with different batch sizes.
        (b) illustrates the failure mode of majority vote observed in IMO3. The model produces both the correct answer ``4'' and the incorrect answer ``2''. As batch size $B$ increases, the probability of selecting the distractor ``2'' grows due to model bias.
    }
    \vspace{-4mm}
    \label{fig: batch-scaling-imo}
\end{figure}

A counter-intuitive phenomenon was observed in the batch scaling experiments on the IMO task (Fig.~\ref{fig: individual-scaling}): the test accuracy of \textbf{Batch Scaling (Vote)} decreased as the batch size $B$ increased beyond 15. 
This contradicts the usual ensemble-learning intuition that aggregating more samples should reduce variance and improve accuracy.

Upon examining the experimental results, we observed that the performance on both IMO1 and IMO5 improved with increasing batch size, with the growth curve plateauing when the batch size is sufficiently large. However, IMO3 exhibited a counterintuitive trend where accuracy consistently decreased as the batch size increased, as shown in Fig.~\ref{fig: batch-scaling-imo}(a). 
We analyze the results on IMO3 task and explore a bias amplification effect on response chosen by majority vote, as shown in Fig.~\ref{fig: batch-scaling-imo}(b). 
The model shows a consistent preference for the incorrect answer ``2'' over the correct answer ``4'', i.e.,
\[
\Pr(\text{2 (incorrect)} \mid \text{IMO3}) > \Pr(\text{4 (correct)} \mid \text{IMO3}).
\]
As the batch size $B$ increases, the empirical vote proportion for ``2'' dominates, and the aggregated prediction $\hat{a}_B$ increasingly favors the wrong answer, leading to a drop in test accuracy.

To understand this behavior, we formalize the condition under which majority voting can amplify model bias and thereby hurt accuracy.

\begin{theorem}[Systematic Bias Amplification under Majority Voting]
\label{thm:majority-bias}
Given an LLM policy $\pi_\theta$, an input question $x\in\mathcal X$, and a unique ground-truth answer $a^*$. If some incorrect answer $\tilde{a}$ has strictly higher probability of being produced by the LLM than $a^*$, then the accuracy of
Majority Voting approaches zero as the batch size grows.  
Formally,
\[
\lim_{B\to\infty}
\Pr_{y_1,\dots,,y_B\sim \pi_{\theta}(\cdot\mid x)}\!\left[\mathcal{A}_{\mathrm{Vote}}(\{y_1,\ldots,y_B\}) = a^*\right] = 0.
\]
\end{theorem}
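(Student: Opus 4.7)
The plan is to reduce the statement to a concentration argument about the empirical distribution of extracted answers. First I would define the answer marginals $p^* := \Pr_{y \sim \pi_\theta(\cdot \mid x)}[\mathrm{Extract}(y) = a^*]$ and $\tilde p := \Pr_{y \sim \pi_\theta(\cdot \mid x)}[\mathrm{Extract}(y) = \tilde a]$, so that the hypothesis becomes $\tilde p > p^*$. By the very definition of $\mathcal{A}_{\mathrm{Vote}}$ as an $\arg\max$ over empirical frequencies, the event $\{\mathcal{A}_{\mathrm{Vote}}(\{y_1,\ldots,y_B\}) = a^*\}$ forces the count for $a^*$ to be at least that of every competing answer, in particular $N_{a^*} \ge N_{\tilde a}$, where $N_a := |\{i : \mathrm{Extract}(y_i) = a\}|$. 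So it suffices to show that this inequality fails with high probability as $B$ grows.

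Next, because $\mathrm{Extract}(y_1), \ldots, \mathrm{Extract}(y_B)$ are i.i.d., both $N_{a^*}/B$ and $N_{\tilde a}/B$ are averages of independent $\{0,1\}$ indicators with means $p^*$ and $\tilde p$ respectively. Setting the margin $\delta := (\tilde p - p^*)/2 > 0$, I would apply Hoeffding's inequality twice to obtain $\Pr[N_{a^*}/B \ge p^* + \delta] \le e^{-2B\delta^2}$ and $\Pr[N_{\tilde a}/B \le \tilde p - \delta] \le e^{-2B\delta^2}$. On the intersection of the complements, $N_{a^*}/B < p^* + \delta = \tilde p - \delta < N_{\tilde a}/B$, so $N_{a^*} < N_{\tilde a}$ and the voting rule cannot return $a^*$. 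A union bound then gives $\Pr[\mathcal{A}_{\mathrm{Vote}} = a^*] \le 2 e^{-2B\delta^2}$, which tends to $0$ as $B \to \infty$. One could equivalently invoke the strong law of large numbers to conclude $N_{a^*}/B \to p^*$ and $N_{\tilde a}/B \to \tilde p$ almost surely and then apply $\tilde p > p^*$, but the Hoeffding route additionally delivers an exponential decay rate.

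I do not expect any substantive obstacles; the two points that require care are essentially notational. First, the tie-breaking rule for $\arg\max$ is immaterial, since the argument only relies on the necessary condition $N_{a^*} \ge N_{\tilde a}$ for $a^*$ to be selected—any tie-breaking scheme respects this. Second, one should briefly justify that $\mathrm{Extract}$ is a measurable function of the sampled response $y$ so that $p^*$ and $\tilde p$ are well-defined marginals, after which the reduction to i.i.d. Bernoulli counts is immediate. The Hoeffding bound even strengthens the theorem quantitatively, showing that the probability of recovering $a^*$ collapses exponentially in $B\delta^2$ whenever any persistent distractor bias exists—precisely the failure mode observed empirically on IMO3 in Section~\ref{sec: major-vote-analysis}.
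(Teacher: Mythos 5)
Your proposal is correct and follows essentially the same route as the paper: both reduce the event $\{\mathcal{A}_{\mathrm{Vote}}=a^*\}$ to the count comparison $N(a^*)\ge N(\tilde a)$ and show this fails as $B\to\infty$ because the empirical frequencies concentrate around $p(a^*)<p(\tilde a)$. The only difference is that you replace the paper's law-of-large-numbers step with Hoeffding's inequality plus a union bound, which gives the same conclusion with an explicit exponential rate $2e^{-2B\delta^2}$.
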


\begin{proof}
See Appendix ~\ref{app: proof1}.
\end{proof}

This finding reveals that batch scaling with majority voting is not inherently reliable and can even be detrimental when the underlying model exhibits systematic response biases. Instead of mitigating randomness, larger batch sizes may magnify these biases and push predictions further away from the correct answer. This highlights the need for alternative aggregation strategies or for scaling along other dimensions to ensure that increasing the inference scale actually improves performance rather than undermining it.

\begin{figure}
	\centering
	\vspace{-4mm}
 \includegraphics[width=0.8\textwidth]{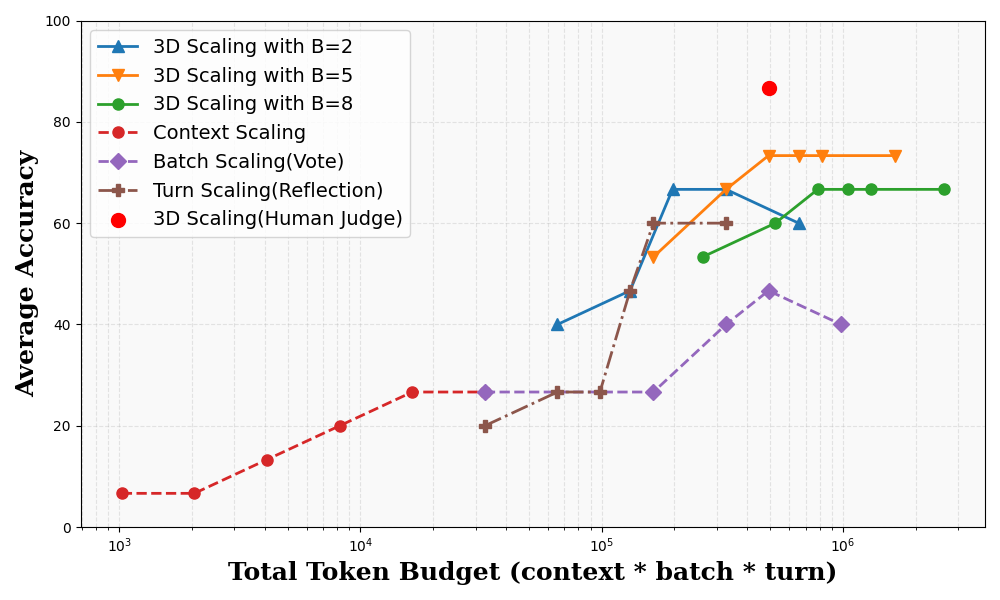}
	\centering 
	\vspace{-4mm}
    \caption{The average accuracy over the IMO2025 dataset as a function of the total thinking budget for individual scaling and 3D Scaling with different batch sizes. 3D Scaling achieves performance beyond the limits of individual scaling, reaching 73.3\%. The red marker denotes 3D Scaling with a human judge, which attains 86.7\% accuracy, highlighting the effectiveness of human feedback.
    }
	\vspace{-4mm}
    \label{fig: all-comparison}
\end{figure}

\subsubsection{3D Scaling Analysis}

We conducted 3D Scaling experiments that combine batch scaling and turn-based scaling, using a simple preference aggregation function provided by the LLM Judge. The three solid curves in Fig.~\ref{fig: all-comparison} about 3D Scaling with various batch sizes show how model accuracy varies with two parameters: the \textit{batch size} $B$ and the number of \textit{turns} $T$. The plotted results correspond to the average accuracy over the IMO2025 dataset as a function of the total thinking budget. Notably, 3D Scaling exceeds the performance limits of individual scaling, reaching an accuracy of 73.3\%.

The results largely align with those from single-dimension scaling. Increasing the number of turns $T$ initially improves performance by stabilizing predictions. However, further increases lead to saturation and may even reduce accuracy, likely because an incorrect judgment in one turn can propagate through subsequent refinements.

Increasing batch size $B$ from 1 to 5 substantially improves overall performance, while further increasing it to 8 results in a performance drop. In particular, $B=5$ yields the strongest performance among all evaluated settings, outperforming both $B=2$ and $B=8$. Moreover, under $B=5$, the achieved accuracy exceeds the baseline by more than a factor of three, underscoring the effectiveness of moderate batch sizes. We conjecture that the decline observed at $B=8$ may be due to the LLM failing to correctly identify the best and worst solutions when the number of candidate solutions grows, highlighting an open problem of how to perform this selection optimally.

Fig.~\ref{fig: all-comparison} compares 3D Scaling with baseline methods across different parameter settings. The results show that 3D Scaling effectively leverages the reasoning capabilities of the LLM, achieving a maximum average accuracy of 73.3\%. We also report the outcome of applying 3D Scaling with human judgment under the setting $C=32768, B=5, T=3$, where the score rises to 86.7\%. The red marker in Fig.~\ref{fig: all-comparison} highlights this result, demonstrating the substantial benefits of incorporating human feedback.

\textbf{Insights:} From the results, we summarize two key insights:

\begin{itemize}
    \item \textbf{Scaling Saturation.} 
    Performance improves along all three individual scaling dimensions—context, batch, and turn—but only up to a limited extent. 
    Context scaling quickly reaches a plateau due to bounded information utilization, turn scaling (reflection) yields diminishing returns after several iterations, and batch scaling even causes a performance drop when the batch becomes too large.

    \item \textbf{Beyond Single-Dimension Boundaries.} 
    Combining multiple scaling dimensions is observed to surpass the performance ceiling of any single dimension. 
    This suggests that different forms of scaling may complement one another rather than overlap in effect. 
    Whether there exist additional, yet unexplored, scaling dimensions that could further extend this frontier remains an open and important research question for developing more capable reasoning systems.
\end{itemize}

\subsection{Evaluating 3D Scaling on Benchmark Tasks}

In this subsection, we present 3D scaling experiments with selection feedback from both LLM and human judges on three challenging benchmarks, using a setting of $B=5, T=3$. For statistical reliability, we conducted 5 independent trials for each comparative methods. To ensure fairness, batch scaling was configured to generate 15 solutions per trial, thereby matching the total token budget of the 3D scaling setup. We present the average final scores across different trials on all benchmarks using radar charts in Fig.~\ref{fig:radar_all}.

\begin{figure}[htbp]
    \centering
    \begin{subfigure}[b]{0.48\textwidth}
        \centering
        \includegraphics[width=\textwidth]{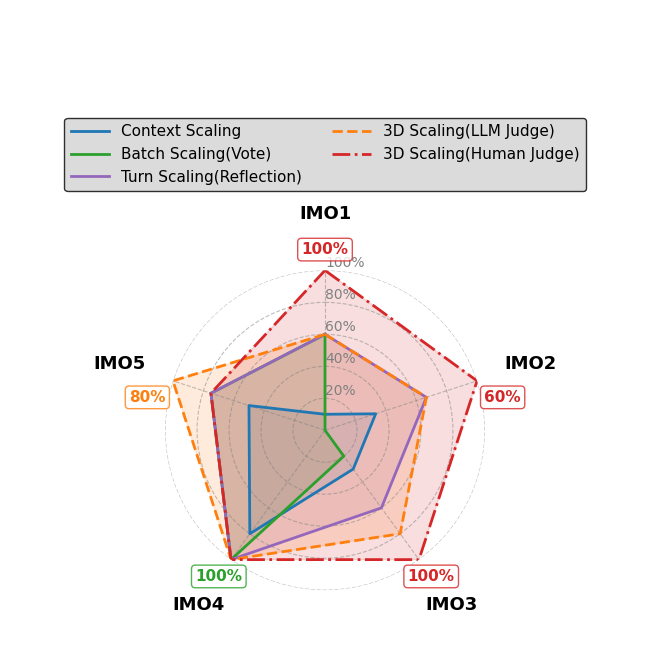}
        \caption{Average accuracy on the IMO 2025 benchmark}
        \label{fig:imo_radar}
    \end{subfigure}
    \hspace{0.02\textwidth}
    \begin{subfigure}[b]{0.48\textwidth}
        \centering
        \includegraphics[width=\textwidth]{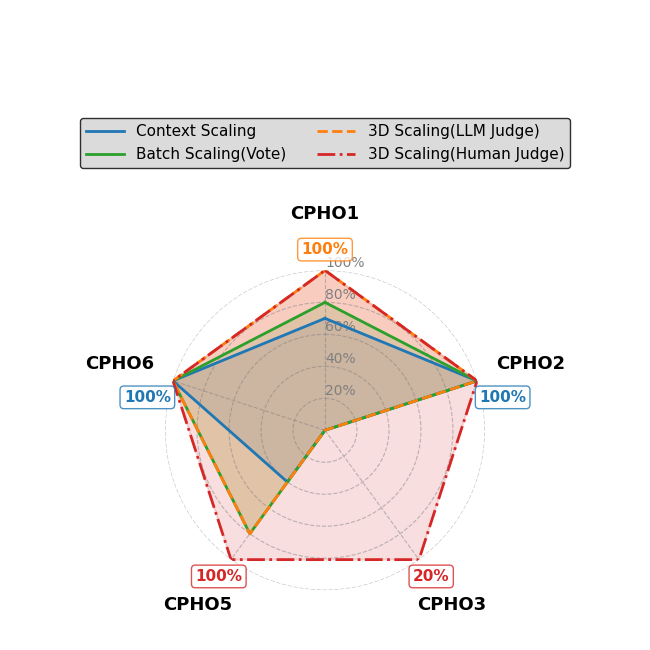}
        \caption{Average accuracy on the CPHO 2022 benchmark}
        \label{fig:cpho_radar}
    \end{subfigure}

    \vspace{0.02\textwidth}

    \begin{subfigure}[b]{0.48\textwidth}
        \centering
        \includegraphics[width=\textwidth]{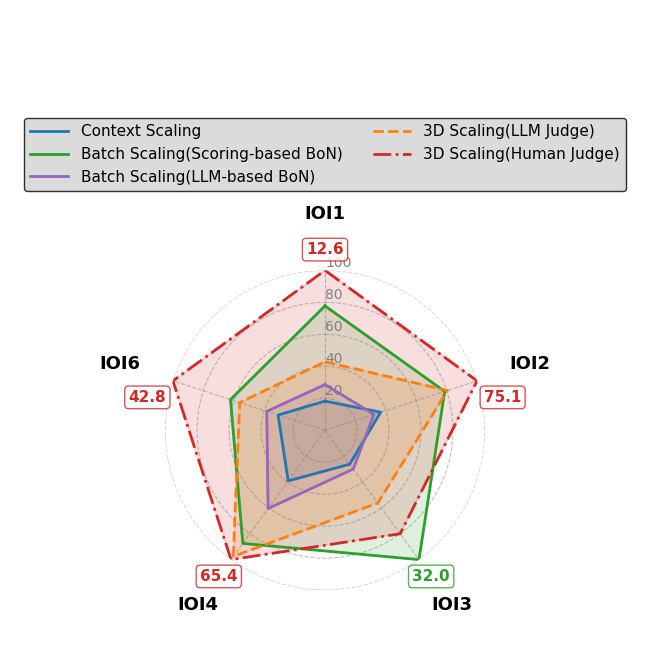}
        \caption{Average score on the IOI 2025 benchmark}
        \label{fig:ioi_radar}
    \end{subfigure}
    \hspace{0.02\textwidth}
    \begin{subfigure}[b]{0.48\textwidth}
        \centering
        \includegraphics[width=\textwidth]{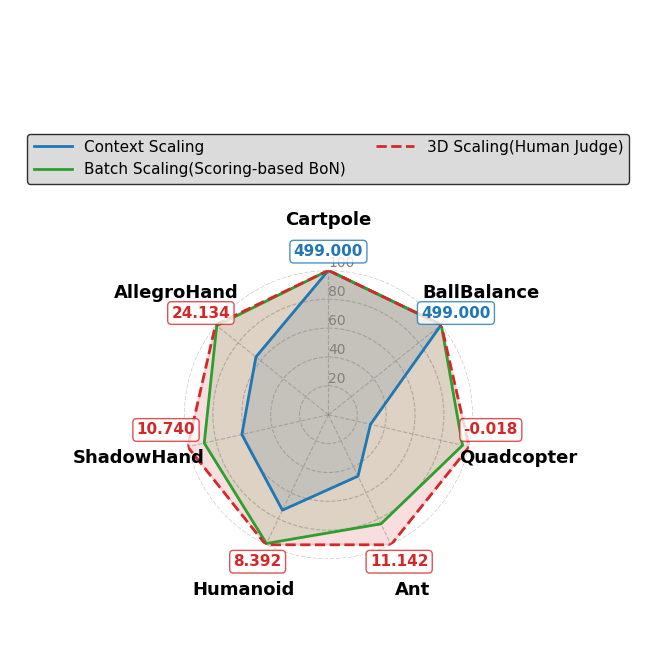}
        \caption{Average score on the IsaacGym benchmark}
        \label{fig:isaacgym_radar}
    \end{subfigure}

    \caption{
    Comprehensive comparison of different test-time scaling methods across four domains: Math Olympics\;(IMO2025), Physics Olympics\;(CPHO2022), Coding\;(IOI2025), and Embodied\;(IsaacGym). Each dimension in the radar charts represents a single task or problem and is normalized by the best-performing method on that specific dimension.
    3D Scaling with a human judge consistently outperforms baseline methods including context scaling, turn scaling, and batch scaling, across different benchmarks. 3D Scaling with LLM judge also achieves competitive results on the IMO 2025 and CPHO 2022 benchmarks, but performs worse than 3D Scaling with a human judge on the challenging programming task. 
    Results for IMO6 and CPHO4 are excluded due to zero accuracy across all methods.
    (Since IMO2 is a fully proof problem, it is impossible to do Gemini vote for a proof process. So we have not done the Batch Scaling(Vote) experiment for IMO2. )
    }
    \label{fig:radar_all}
\end{figure}

\subsubsection{Math Olympics}

The performance of different test-time scaling methods on all six problems in IMO 2025 (\cite{IMO2025}) is summarized in Table~\ref{tab:imo}. The experimental results reveal several key observations. The single-response \textbf{Context Scaling} approach achieved moderate performance. Analysis of the responses indicates that while the model can produce reasonable answers over multiple runs, it often generates incomplete or partially valid reasoning. \textbf{Batch Scaling (Vote)} and \textbf{Turn Scaling (Reflection)} improve accuracy over the context scaling baseline by scaling along individual dimensions. However, both methods reach saturation when the scale increases to 15, and the model’s ability to produce fully complete reasoning remains limited.
The fully automated iterative refinement approach, \textbf{3D Scaling (LLM Judge)}, demonstrates competitive performance, achieving higher accuracy than the baseline scaling methods. This suggests that scaling across multiple dimensions can overcome the limitations of single-dimension scaling. 
Furthermore, applying \textbf{3D Scaling (Human Judge)} leads to substantial improvements over all baselines, achieving the best overall performance. Incorporating human feedback addresses the LLM’s tendency to produce incomplete reasoning, enabling it to generate solutions with fully correct reasoning through iterative refinement.

\begin{table}[htbp]
\small
\centering
\caption{Average accuracy of different test-time scaling methods on IMO 2025. For each problem column, each entry in the table is shown as $n/m$, where $n$ is the number of correct trials and $m$ is the total number of trials. The final column reports the overall accuracy across all problems.}
\label{tab:imo}
\vspace{2mm}
{
\begin{tabular}{cccccccccc}
\toprule
Method & IMO1 & IMO2 & IMO3 & IMO4 & IMO5 & IMO6 & Average \\
\midrule
Context Scaling & 1/5 & 0/5 & 1/5 & 4/5 & 2/5 & 0/5 & 26.7\%\\
Batch Scaling (Vote) & 3/5 & / & 1/5 & \textbf{5/5} & 3/5 & 0/5 & 48\%\\
Turn Scaling (Reflection) & 3/5 & 2/5 & 3/5 & \textbf{5/5} & 3/5 & 0/5  & 53.3\%\\
3D Scaling (LLM Judge) & 3/5 & 2/5 & 4/5 & \textbf{5/5} & \textbf{4/5} & 0/5 
& 60\%\\
3D Scaling (Human Judge) & \textbf{5/5} & \textbf{3/5} & \textbf{5/5} & \textbf{5/5} & 3/5 & 0/5 & \textbf{70\%}\\
\bottomrule
\end{tabular}
}
\vspace{-2mm}
\end{table}

\subsubsection{Physics Olympics}
The performance of different test-time scaling methods on all six problems in CPHO 2022 is summarized in Table~\ref{tab:cpho}. The results on physics competition problems demonstrate a trend consistent with that observed in mathematical competitions: \textbf{3D Scaling (Human Judge)} achieves the highest accuracy, followed by \textbf{3D Scaling (LLM Judge)}, \textbf{Batch Scaling (Vote)}, and finally the single-response \textbf{Context Scaling}. Extra analysis is provided in Appendix~\ref{app: pho}.

\begin{table}[htbp]
\small
\centering
\caption{Average accuracy of different test-time scaling methods on CPHO 2022. For each problem column, each entry in the table is shown as $n/m$, where $n$ is the number of correct trials and $m$ is the total number of trials. The final column reports the overall accuracy across all problems.}

\vspace{2mm}
\label{tab:cpho}
{
\begin{tabular}{cccccccccc}
\toprule
Method & CPHO1 & CPHO2 & CPHO3 & CPHO4 & CPHO5 & CPHO6 & Average\\
\midrule
Context Scaling & 4/5 & \textbf{5/5} & 0/5 & 0/5 & 2/5 & \textbf{5/5} & 53.3\%\\
Batch Scaling (Vote) & 4/5 & \textbf{5/5} & 0/5 & 0/5 & 4/5 & \textbf{5/5} & 60\%\\
3D Scaling (LLM Judge) & \textbf{5/5} & \textbf{5/5} & 0/5 & 0/5 & 4/5 & \textbf{5/5} & 63.3\%\\
3D Scaling (Human Judge) & \textbf{5/5} & \textbf{5/5} & \textbf{1/5} & 0/5 & \textbf{5/5} & \textbf{5/5} &
\textbf{70\%}\\
\bottomrule
\end{tabular}
}
\vspace{-2mm}
\end{table}

\subsubsection{Coding}

The test results of different test-time scaling methods on IOI 2025 are presented in Table~\ref{tab:ioi}. Among the six problems in IOI 2025, the fifth problem is a communication task; since the backbone model cannot access the submission API, its performance on this task is unsatisfactory, and we therefore exclude it from evaluation. For reference, the bronze medal cutoff at IOI 2025 is 252 points, and a score of 221.53 corresponds to roughly the top 60\%. This lower performance arises because, without the ability to test code correctness, LLMs face substantial difficulty in producing higher-scoring solutions.

\begin{table}[htbp]
\small
\centering
\caption{Average scores of different test-time scaling methods on IOI 2025. Standard deviations are shown in parentheses.}
\label{tab:ioi}
\vspace{2mm}
{
\begin{tabular}{ccccccccc}
\toprule
Method & IOI1 & IOI2 & IOI3 & IOI4 & IOI6 & Sum \\
\midrule
Context Scaling & 2.3(2.68) & 27.49(25.0) & 8.4(4.15) & 25.6(19.3) & 13.2(17.3) & 76.99 \\
Batch Scaling (LLM-based BoN) & 3.6(3.51) & 24.19(12.7) & 9.6(3.36) & 39.6(10.4) & 16.4(10.9) & 93.39 \\
Batch Scaling (Scoring-based BoN) & 9.8(5.6) & 59.14(18.8) & \textbf{32.0(21.4)} & 57.2(12.4) & 26.6(5.2) & 184.74 \\
3D Scaling (LLM Judge) & 5.4(2.19) & 60.51(14.2) & 18.0(2.74) & 63.8(12.2) & 24.0(0.63) & 171.71 \\
3D Scaling (Human Judge) & \textbf{12.6(6.85)} & \textbf{75.13(1.16)} & 25.6(16.4) & \textbf{65.4(1.34)} & \textbf{42.8(14.8)} & \textbf{221.53} \\
\bottomrule
\end{tabular}}
\vspace{-2mm}
\end{table}

The results indicate that 3D Scaling can substantially enhance coding performance through human feedback. On IOI problems, LLMs often struggle to generate fully correct solutions in a zero-shot setting. Consequently, \textbf{Context Scaling} typically solves only a subset of tasks and sometimes contains errors in complexity analysis. Because the codes generated by the LLM vary significantly across IOI problems, we adopt both \textbf{Batch Scaling (LLM-based BoN)} and $\textbf{Batch Scaling (Scoring-based BoN)}$ that chooses the best solution from responses to demonstrate the upper bound of Batch Scaling.

When feedback from a human is incorporated, 3D Scaling demonstrates the best performance. Even when all early solutions are incorrect, the \textbf{3D Scaling (Human Judge)} can identify issues in the code and iteratively refine them, enabling it to solve more tasks and achieve higher scores. Across nearly all experiments, this approach produces final scores that surpass the best first-round solutions, achieving an average improvement of approximately 19.9\% over the \textbf{Batch Scaling (Scoring-based BoN)} baseline.

Because the test API is inaccessible and the problems are relatively difficult, the LLM’s ability to select the best solution is less reliable, leading to a large performance gap between \textbf{Batch Scaling (LLM-based BoN)} and \textbf{Batch Scaling (Scoring-based BoN)}. Nevertheless, \textbf{3D Scaling (LLM Judge)} still consistently outperforms the \textbf{Context Scaling} approach and achieves comparable overall performance. Although it is less precise than \textbf{3D Scaling (Human Judge)} on challenging tasks, these results highlight the feasibility and effectiveness of auto-feedback mechanisms for improving code generation, even in the absence of human feedback.

\subsection{Experiments on Innovative Tasks}
\label{sec:embodied-details}

In this section, we evaluate the effects of human feedback on several robotics reinforcement learning tasks using the GPU-based IsaacGym framework (\cite{makoviychuk2021isaacgymhighperformance}), including \textit{Cartpole, BallBalance, Quadcopter, Ant, Humanoid, ShadowHand, and AllegroHand}, along with a challenging and innovative new task, \textit{HumanoidJump}, defined as “making a humanoid jump like a real human”, which is an open-ended challenge without gold-standard answers.

We employed GPT-4o as the backbone model. The model was prompted to generate task-specific reward functions, which were then used to train agents in the simulator.
In these tasks, we employed settings of $B=6$ and $T=5$ for \textbf{3D Scaling (Human Judge)}. In each iteration, the evaluators selected the best and worst reward functions based on behavior videos of the agents trained with these reward functions. Details about this process and evaluators are provided in Appendix~\ref{sec: innovative-human}. We also report the results with the \textbf{Context Scaling} and \textbf{Batch Scaling (Scoring-based BoN)}.

In each turn, in addition to providing preference feedback, we also generate automatic feedback with LLM, which is combined with human preferences as the feedback prompt for the next round to assist the LLM in refinement. The automatic feedback consists of the following three components:
\begin{itemize}
\item \textbf{Evaluation of reward functions}: The component values that make up the good and bad reward functions are obtained from the environment during training and provided to the LLM. This helps the LLM assess the usefulness of different parts of the reward function by comparing the two.
\item \textbf{Differences between historical reward functions}: We employed GPT-4o to analyze the differences between the historically best reward functions from each iteration. These differences were then provided to the generator LLM to assist in refining the reward function.
\item \textbf{Reward trace of historical reward functions}: The reward trace, consisting of the values of the good reward functions during training from all prior iterations, is provided to the LLM. This reward trace enables the LLM to evaluate how well the agent is actually able to optimize those reward components.
\end{itemize}

\subsubsection{Task Metric}

For evaluation, we used the reward function in a PPO~\citep{schulman2017proximalpolicyoptimizationalgorithms} training loop following the original setting in IsaacGym, and reported the average task score, measured by the expert-written task metrics across multiple experiments, as the ground truth rewards for each method. We also directly use this ground truth rewards for response selection in baseline \textbf{Batch Scaling(Scoring-based BoN)}.
The details of the task metrics are provided in Appendix\ref{app: isaac-metrics}.
For the \textit{HumanoidJump} task, since designing a reward metric is challenging, we adopt human votes for quantitative evaluation instead, which is detailed in Sec. \ref{sec: humanoid-jump}.

\subsubsection{IsaacGym Tasks Results}

For each environment, we conducted five runs per method and reported the average ground-truth rewards in Table ~\ref{tab: isaac}, while ensuring that \textbf{Batch Scaling (Scoring-based BoN)} and \textbf{3D Scaling (Human Judge)} used the same total token budget. As observed, \textbf{3D Scaling (Human Judge)} significantly outperforms \textbf{Batch Scaling (Scoring-based BoN)} in 3 out of 5 challenging tasks, achieving an average improvement of 18.4\%. In addition, we conducted another set of experiments with a proxy judge and analyzed performance improvements across turns, as detailed in Appendix~\ref{app: isaac}.

\begin{table}
\centering
\caption{Average ground truth rewards of different test-time scaling methods on IsaacGym Tasks. The values in parentheses represent the standard deviation.}
\vspace{2mm}
\resizebox{1.0\textwidth}{!}{
\begin{tabular}{cccccccccc} 
\toprule
         & Cart.  & Ball.  & Quad.              & Ant         & Human.            & Shadow       & Allegro        \\ 
\midrule
Context Scaling & \textbf{499(0)} & \textbf{499(0)} & -0.356(0.29) & 5.262(2.49) & 6.157(0.86)  & 6.605(2.95) & 15.500(9.34)   \\
Batch Scaling (Score-based BoN) & \textbf{499(0)} & \textbf{499(0)}  & -0.0410(0.32)  & 9.350(2.34) & 8.306(1.63) & 9.476(2.44)  & 23.876(7.91)   \\
3D Scaling (Human Judge)  & \textbf{499(0)} & \textbf{499(0)}  & \textbf{-0.0183(0.29)}  & \textbf{11.142(0.37)} & \textbf{8.392(0.53)}  & \textbf{10.740(0.92)} & \textbf{24.134(6.52)} \\
\bottomrule
\end{tabular}
}
\label{tab: isaac}
\vspace{-4mm}
\end{table}



\subsubsection{HumanoidJump Task Results}
\label{sec: humanoid-jump}

\begin{figure}[ht]
	\centering
		\includegraphics[width=0.6\linewidth]{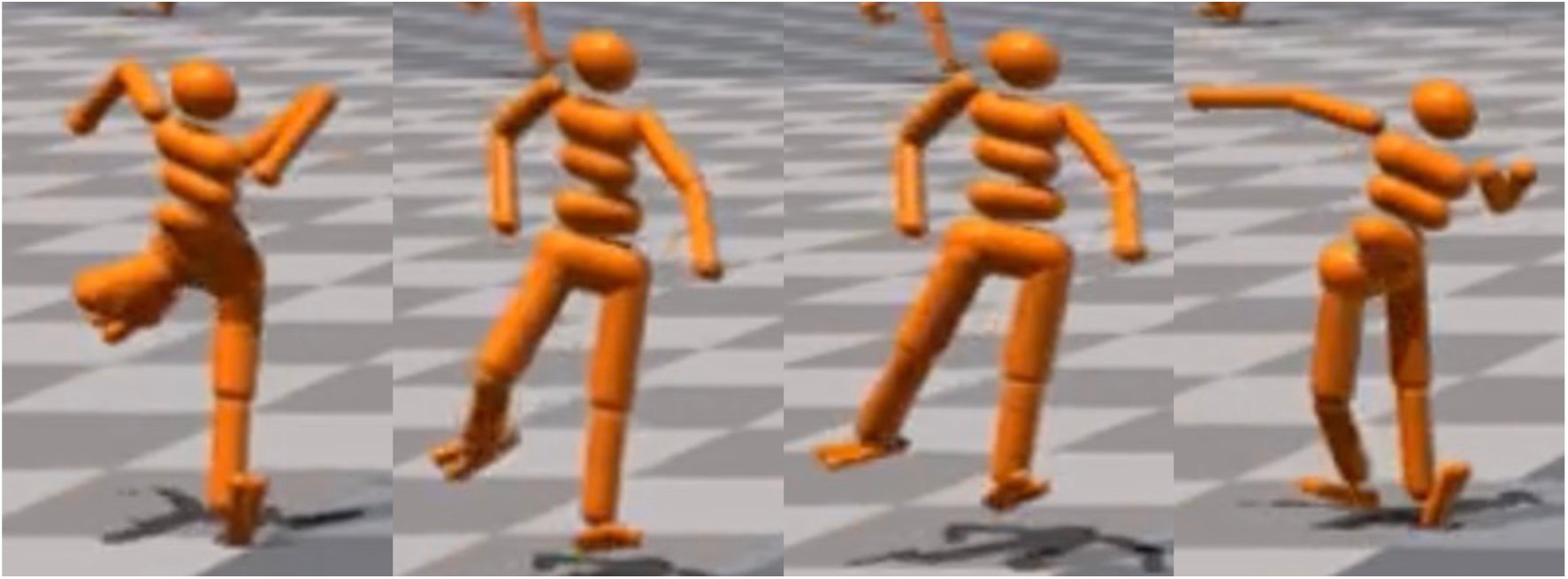}
	\centering 
    \caption{A common behavior.}
    \label{fig: iter}
\end{figure}

\begin{figure}[ht]
    \centering
    \includegraphics[width=\linewidth]{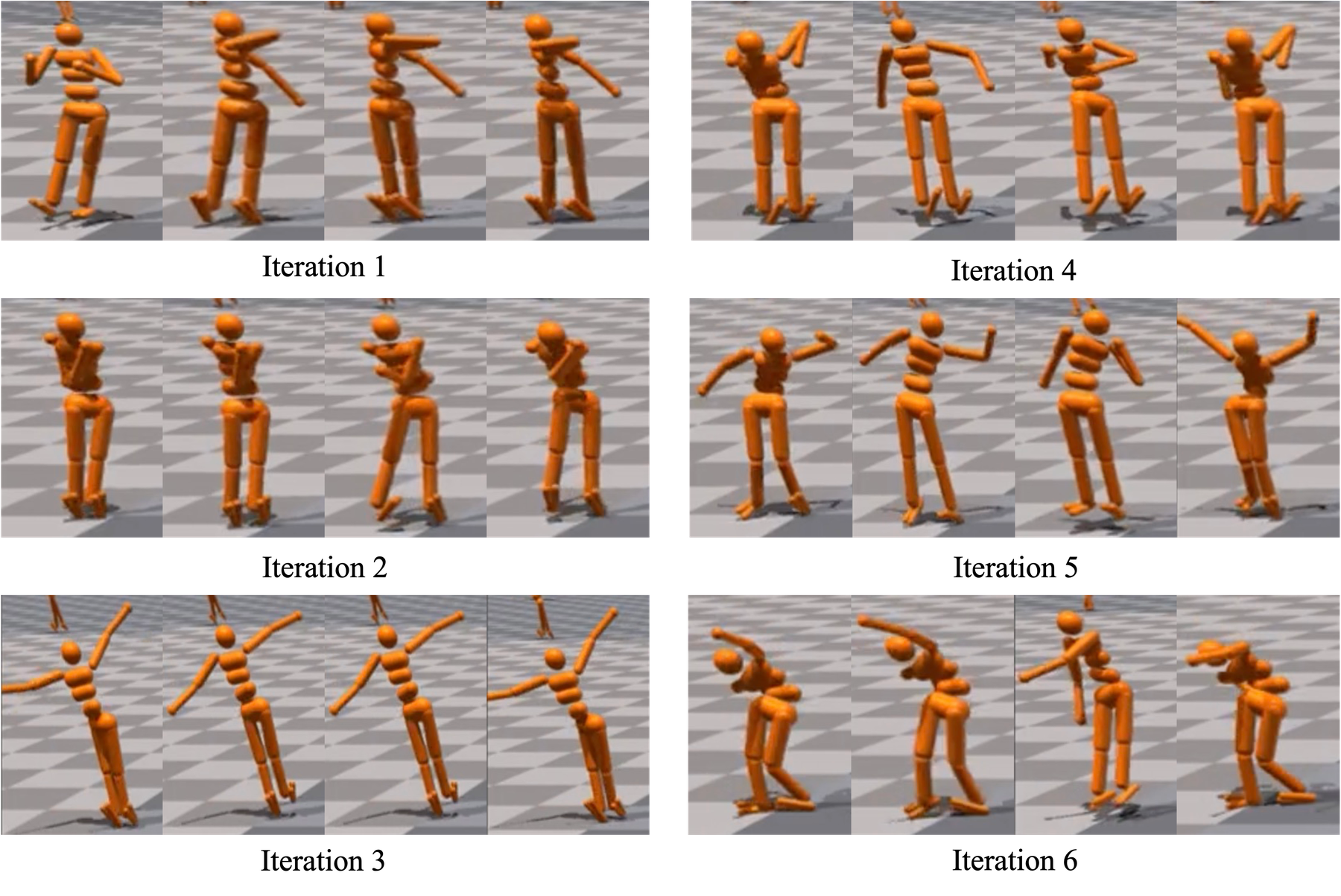}
    \vspace{-5mm}
    \caption{The humanoid learns a human-like jump by bending legs and lowering the upper body to shift the center of mass in a trial of human-in-the-loop 3D Scaling experiments. Note that both legs are used to jump, and the agent bends at the hips.}
    \label{fig: jump}
\end{figure}

Without human feedback, the most common behavior observed in this task, as illustrated in Fig.~\ref{fig: iter} is what we refer to as the ``leg-lift jump.'' This behavior involves initially lifting one leg to raise the center of mass, followed by the opposite leg pushing off the ground to achieve lift. The previously lifted leg is then lowered to extend airtime. Various adjustments of the center of mass with the lifted leg were also noted. This behavior meets the minimal metric of a jump: achieving a certain distance off the ground.
If feedback were provided based solely on this minimal metric, the ``leg-lift jump'' would likely be selected as a candidate reward function. However, 
such candidates show limited improvement in subsequent iterations, failing to evolve into more human-like jumping behaviors.

Conversely, when real human preferences were used to guide the task, the results were notably different. The volunteer judged the overall quality of the humanoid's jump behavior instead of just the metric of leaving the ground. Fig.~\ref{fig: jump} illustrates that the volunteer successfully guided the humanoid towards a more human-like jump by selecting behaviors that, while initially not optimal, displayed promising movement patterns. 
 
In the first iteration, ``leg-lift jump'' was not selected despite the humanoid jumping off the ground. Instead, a video where the humanoid appears to attempt a jump using both legs, without leaving the ground, was chosen. By the fifth and sixth iterations, the humanoid demonstrated more sophisticated behaviors, such as bending both legs and lowering the upper body to shift the center of mass, behaviors that are much more akin to a real human jump.


For quantitative evaluation, we adopt human votes for the quantitative evaluation on \textit{HumanoidJump} task. As a baseline, we use \textbf{Batch Scaling (LLM-based BoN)}, which generates 30 reward functions with the same total token budget as \textbf{3D Scaling (Human Judge)}, and the best reward function is chosen from them by employing GPT-4o. 

\begin{wraptable}{r}{0.45\textwidth}
    \centering
    \caption{Human preferences over different agents.}
    \vspace{-2mm}
    \label{tab:human_preference}
    \small
    \renewcommand{\arraystretch}{1.1}
    \begin{tabular}{lc}
        \toprule
        \textbf{Method} & \textbf{Vote} \\ 
        \midrule
        Batch Scaling (LLM-based BoN) & 3 / 20 \\
        3D Scaling (Human Judge) & \textbf{17 / 20} \\
        \bottomrule
    \end{tabular}
    \vspace{-4mm}
\end{wraptable}

To compare the performance of the two methods, we recruited 20 volunteers. Each volunteer indicated their preference between two videos presented in random order—one generated by \textbf{3D Scaling (Human Judge)} and the other by \textbf{Batch Scaling (LLM-based BoN)}. As shown in Table~\ref{tab:human_preference}, 17 out of 20 participants preferred the \textbf{3D Scaling (Human Judge)} agent, demonstrating that \textbf{3D Scaling (Human Judge)} produces behaviors more aligned with human preferences.

\section{Conclusion and Open Questions}\label{sec:conclu}

In this work, we revisited test-time enhancement techniques for reasoning models from the perspective of scaling laws. By unifying existing approaches under the framework of multi-dimensional test-time scaling, we identified three orthogonal axes—context, batch, and turn—each of which independently exhibits a clear scaling law. Building on this observation, we introduced 3D test-time scaling, which integrates all three dimensions to substantially extend the effective capacity of test-time compute. Our experiments demonstrated that this unified framework not only improves reasoning accuracy on challenging benchmarks such as IOI, IMO, and CPHO, but also naturally supports a human-in-the-loop paradigm that further amplifies model performance. Moreover, we showed that the same principles can be applied to embodied learning, enabling reasoning models to discover novel behaviors for humanoid robot control.

Despite these advances, important open questions remain. While our study has revealed three fundamental scaling dimensions, the capacity of test-time compute is still bottlenecked by architectural and computational constraints. It remains unclear whether additional dimensions of scaling exist that could further unlock the reasoning potential of large models. Exploring such new axes—beyond context, batch, and turn—represents an exciting direction for future research.


\bibliography{iclr2026_conference}
\bibliographystyle{iclr2026_conference}

\newpage
\appendix



\section{Proofs}

\label{app: proof1}

\begin{proof}[proof of Theorem 1]
Define the probabilities of generating the correct answer $a^*$ and an incorrect answer $\tilde{a}$ as
\[
p(a^*) := \Pr_{y\sim\pi_\theta(\cdot|x)}\!\left(\text{Extract}(y) = a^*\right),
\qquad
p(\tilde{a}) := \Pr_{y\sim\pi_\theta(\cdot|x)}\!\left(\text{Extract}(y) = \tilde{a}\right).
\]

Let the extracted answers $a_1,\ldots,a_B$ be i.i.d.\ samples from the model's output distribution.
Define the count variables
\[
N(a^*) := \#\{\, i \in [B] : a_i = a^* \,\},
\qquad
N(\tilde{a}) := \#\{\, i \in [B] : a_i = \tilde{a} \,\}.
\]

By the law of large numbers,
\[
\lim_{B\rightarrow+\infty}\frac{N(a^*)}{B} \xrightarrow[]{} p(a^*),
\qquad
\lim_{B\rightarrow+\infty}\frac{N(\tilde{a})}{B} \xrightarrow[]{} p(\tilde{a}).
\]

Therefore, the probability that the correct answer receives more votes than the incorrect answer is
\[
\Pr\!\left(N(a^*) > N(\tilde{a})\right)
=
\Pr\!\left(
\frac{N(a^*)}{B} - \frac{N(\tilde{a})}{B} > 0
\right).
\]

As $B \to \infty$, this probability converges to
\[
\Pr\!\left(N(a^*) > N(\tilde{a})\right)
\to
\begin{cases}
1, & p(a^*) > p(\tilde{a}), \\[6pt]
0, & p(a^*) < p(\tilde{a}), \\[6pt]
\frac{1}{2}, & p(a^*) = p(\tilde{a}).
\end{cases}
\]

As a result, if $p(a^*) < p(\tilde{a})$, the majority voting mechanism outputs $\tilde{a}$ instead of $a^*$:
\[
\lim_{B \to \infty} \Pr_{y_1, \dots, y_B \sim \pi_{\theta}(\cdot \mid x)} \left[ \mathcal{A}_{\mathrm{Vote}}(\{y_1, \ldots, y_B\}) = a^* \right] = 0.
\]
Thus, majority voting fails to recover the optimal candidate.

\end{proof}
\section{Experiments Details}

\subsection{Benchmark choice}

\label{app:cpho_vs_ipho}

The CPHO dataset was selected over the IPHO dataset(\cite{IPHO} primarily because IPHO problems
are typically decomposed into a large number of weakly related sub-questions (e.g., 20 per problem), making it computationally expensive to evaluate the quality of each individual response. In
contrast, CPHO problems contain fewer sub-questions (e.g., 5 per problem) and exhibit strong logical coherence across all parts of a given problem. As a result, the correctness of the last sub-question
can serve as a reliable indicator of whether the model has successfully solved the entire problem.

\subsection{Extra Analysis on CPHO}

\label{app: pho}
Regarding specific accuracy rates, the LLM exhibits the following characteristics when solving physics competition problems:

\begin{enumerate}
    \item \textbf{Difference in Evaluation between Physics and Mathematics Problems:} 
    Unlike mathematics problems, where the final answer may be relatively straightforward to conjecture while the reasoning process can be highly complex, physics problems typically feature a final answer that is difficult to obtain. Consequently, if a correct final answer is produced, it generally indicates a valid reasoning process. As a result, the evaluation of physics solutions relies almost entirely on the correctness of the final answer.

    \item \textbf{Instability of Final Answers during Self-Improvement:} 
    In contrast to mathematics problems, during self-improvement iterations, the model exhibits a higher tendency to alter the final answer, reflecting greater uncertainty or refinement in the solution process for physics questions.
\end{enumerate}

\subsection{Human Evaluation Details}

\label{app:human-evaluation}

We conducted human-in-the-loop experiments with human participants. During each iteration, human evaluators select the optimal and most deficient solutions among these candidates based on whether they satisfy the task objectives and whether they can be further improved. 

The human evaluators are three volunteers, each of whom has won a gold medal in a national-level Olympics competition in mathematics, physics, or  informatics. Only the best and worst solutions themselves are fed back to the LLM to guide further self-refinement; evaluators do not provide any information about the reasons for their choices or about bugs in these responses. 

During the human evaluation process, the annotators were provided with the standard answers to the mathematics and physics problems. The evaluation protocol was as follows: annotators first assessed whether the final answer provided in the model's response was correct. Only if the final answer was correct did they proceed to evaluate the reasonableness of the key steps within the solution process.

Given the strong interdependence between subproblems within the CPHO physics problems, we manually identified and tagged the final logical step of each problem as a \textit{key subproblem}. In the system prompt, the model was explicitly instructed to present its response to this key subproblem at the very beginning of its overall reply. This design allows human annotators to quickly gauge the problem's overall correctness; if the answer to the key subproblem is correct, it serves as a strong indicator that the entire problem has likely been solved correctly.

For IOI tasks, the evaluators additionally compile and run the code generated by the model, testing it against test cases that satisfy problem-specific subtasks and constraints.

\section{Extra Experiments on IsaacGym tasks}
\label{app: isaac}

In this section, we discussed the details of experiments on IsaacGym tasks.

\subsection{Environment Details} \label{app: env}
In Table \ref{tab: app-env}, we present the observation and action dimensions, along with the task description and task metrics for 9 tasks in IsaacGym.

\begin{table}[htbp]
\centering
\caption{Details of IssacGym Tasks.}
\vspace{2mm}
\begin{tabular}{>{\raggedright\arraybackslash}p{12cm}} 
\toprule
\textbf{Environment (obs dim, action dim)} \\
{Task Description} \\
\textit{Task Metric} \\
\hline
\hline
\textbf{Cartpole (4, 1)} \\
To balance a pole on a cart so that the pole stays upright \\
\textit{duration} \\
\hline
\hline
\textbf{Quadcopter (21, 12)} \\
To make the quadcopter reach and hover near a fixed position \\
\textit{-cur\_dist} \\
\hline
\hline
\textbf{FrankaCabinet (23, 9)} \\
To open the cabinet door \\
\textit{1 if cabinet\_pos $>$ 0.39} \\
\hline
\hline
\textbf{Anymal (48, 12)} \\
To make the quadruped follow randomly chosen x, y, and yaw target velocities \\
\textit{-(linvel\_error + angvel\_error)} \\
\hline
\hline
\textbf{BallBalance (48, 12)} \\
To keep the ball on the table top without falling \\
\textit{duration} \\
\hline
\hline
\textbf{Ant (60, 8)} \\
To make the ant run forward as fast as possible \\
\textit{cur\_dist - prev\_dist} \\
\hline
\hline
\textbf{AllegroHand (88, 16)} \\
To make the hand spin the object to a target orientation \\
\textit{number of consecutive successes where 
current success is 1 if rot\_dist $<$ 0.1} \\
\hline
\hline
\textbf{Humanoid (108, 21)} \\
To make the humanoid run as fast as possible \\
\textit{cur\_dist - prev\_dist} \\
\hline
\hline
\textbf{ShadowHand (211, 20)} \\
To make the shadow hand spin the object to a target orientation \\
\textit{number of consecutive successes where 
current success is 1 if rot\_dist $<$ 0.1} \\
\bottomrule
\end{tabular}
\label{tab: app-env}
\end{table}

\subsection{Task Metrics}
\label{app: isaac-metrics}
We employed the average of the sparse rewards across parallel environments as the task metrics, following the original setting in IsaacGym.

To assess the generated reward function in each RL run, we take the maximum task metric value sampled at fixed intervals, referred to as the \textit{task score of the reward function} (RTS). In each iteration, 3D Scaling generates $B = 6$ RL runs and selects the best and worst reward functions in that iteration. 3D Scaling performs $T = 5$ iterations and then chooses the best reward function from the final iteration as the final reward function. The RTS of this reward function is recorded as the \textit{task score} (TS) for each experiment. Due to the inherent randomness of LLMs, we conduct five experiments for all methods and report the highest TS as the \textit{final task score} (FTS) for each approach. A higher FTS indicates better overall performance across all tasks. 

\subsection{3D Scaling with Proxy Judge}
In IsaacGym tasks, it is difficult for an LLM to evaluate the quality of reward functions from videos as humans do. To address this, we use human-designed expert rewards as a proxy for human preference, enabling rapid and quantitative evaluation of our approach. This proxy represents a noise-free case that is likely easier than real human trials. Importantly, these human-designed rewards are used solely to automate sample selection and are never included in the prompts sent to the LLM; the LLM never observes the functional form of the ground-truth rewards nor receives any values from them. The results are referred to as \textbf{3D Scaling(Proxy Judge)} in the tables. We then provide the final average FTS with this extra variant in Table \ref{tab: isaac-full}
\begin{table}
\centering
\caption{Average FTS of different test-time scaling methods on IsaacGym Tasks. The values in parentheses represent the standard deviation.}
\resizebox{1.0\textwidth}{!}{
\begin{tabular}{cccccccccc} 
\toprule
         & Cart.  & Ball.  & Quad.              & Ant         & Human.            & Shadow       & Allegro        \\ 
\midrule
Context Scaling & 499(0) & 499(0) & -0.356(0.29) & 5.262(2.49) & 6.157(0.86)  & 6.605(2.95) & 15.500(9.34)   \\
Batch Scaling(Scoring-based BoN) & 499(0) & 499(0) & -0.0410(0.32)  & 9.350(2.34) & 8.306(1.63) & 9.476(2.44)  & 23.876(7.91)   \\
3D Scaling(Human Judge)  & 499(0) & 499(0)  & -0.0183(0.29)  & 11.142(0.37) & 8.392(0.53)  & 10.740(0.92) & 24.134(6.52) \\
3D Scaling(Proxy Judge)   & 499(0) & 499(0) & -0.0195(0.09) & 12.04(1.69) & 9.227(0.93)  & 13.231(1.88) & 25.030(3.721)  \\
\bottomrule
\end{tabular}
}
\label{tab: isaac-full}
\end{table}.

\subsection{Improvement Analysis}

As observed, on average, \textbf{3D Scaling(Proxy Judge)} achieves a 27.4\% improvement over \textbf{Batch-Scaling(Scoring-based BoN)}. 
We can also observe that 3D Scaling exhibits lower variance than Batch Scaling, indicating more stable reward learning behavior. 

While it is possible that the LLMs could generate an optimal reward function in a zero-shot manner, the primary focus of our analysis is not solely on absolute performance values. Rather, we emphasize whether 3D Scaling is capable of enhancing performance through the iterative incorporation of preferences. 
We calculated the average RTS improvement compared to the first iteration for the two tasks with the largest improvements compared with \textbf{Batch-Scaling(Scoring-based BoN)}, \textit{Ant}, and \textit{ShadowHand}. 
As shown in Fig. \ref{fig: impro}, RTS demonstrates improved performance after multiple iterations (e.g., 5 vs. 1), highlighting its effectiveness in refining reward functions.

\begin{figure}
	\centering
 \includegraphics[width=0.5\textwidth]{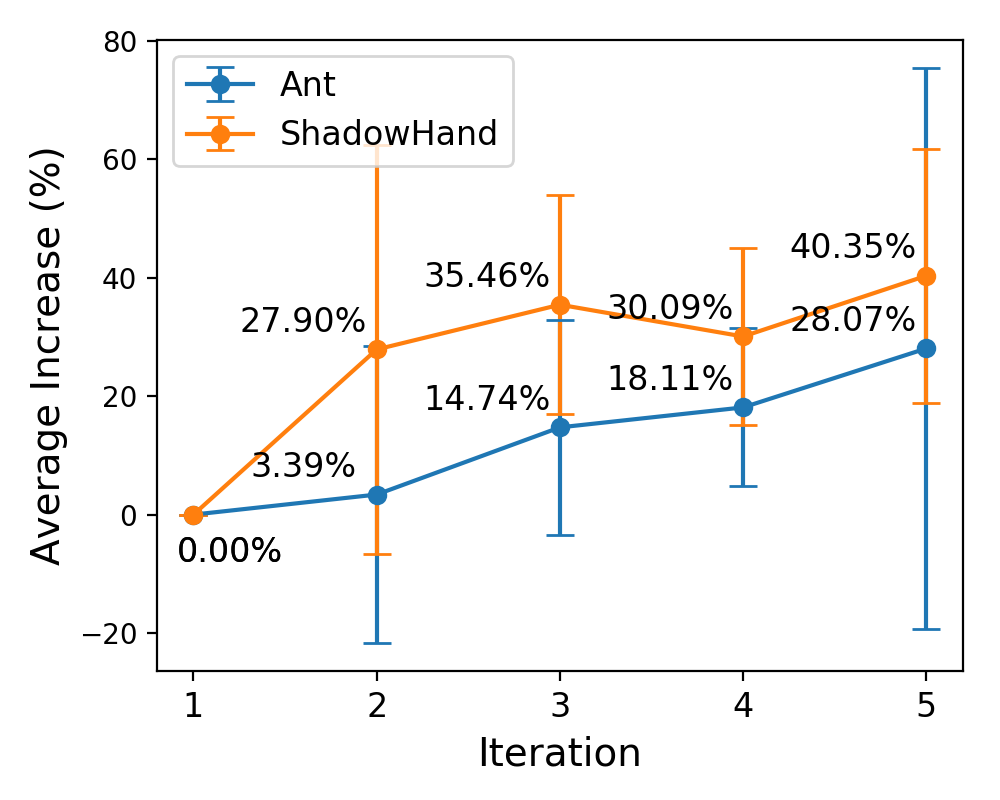}
	\centering 
	\vspace{-4mm}
    \caption{Average improvement of the Reward Task Score (RTS) compared with the first iteration in 3D scaling-Proxy Judge for the Ant and ShadowHand tasks, demonstrating the method's effectiveness in refining reward functions.}
	\vspace{-4mm}
    \label{fig: impro}
\end{figure}

\subsection{Pseudocode}

The full pseudocode of 3D Scaling on embodied AI tasks is listed in Algo. \ref{algo: app-icpl}.

\begin{algorithm}[htbp]

\SetAlgoLined
\SetKwFunction{History}{History} 
\SetKwFunction{Feedback}{Feedback} 
\SetKwProg{Fn}{Function}{:}{}
\KwIn{\# iterations $N$, \# samples in each iterations $K$, environment $\texttt{Env}$, coding LLM $\texttt{LLM}_{RF}$, difference LLM $\texttt{LLM}_{Diff}$}
\Fn{\Feedback{$\texttt{Env}, \texttt{RF}$}}{
    \KwRet The values of each component that make up \texttt{RF} during the training process in \texttt{Env}
}
\Fn{\History{$\texttt{RFlist}, \texttt{Env}, \texttt{LLM}_{Diff}$}}{
    $\texttt{HistoryFeedback} \gets ``"$
    
    \For{$i \gets 1$ to \textbf{len}$(\texttt{RFlist})-1$}{
    
    \tcp{The reward trace of historical reward functions}
    $\texttt{HistoryFeedback} \gets \texttt{HistoryFeedback} + \texttt{Feedback}{(\texttt{Env}, \texttt{RFlist[$i-1$]})}$

    \tcp{The differences between historical reward functions}
    $\texttt{HistoryFeedback} \gets \texttt{HistoryFeedback} + \texttt{LLM}_{Diff}(\texttt{DifferencePrompt}+ \texttt{RFlist[$i$]}+ \texttt{RFlist[$i-1$]})$
    }
    \KwRet{$\texttt{HistoryFeedback}$}
}

\tcp{Initialize the prompt containing the environment context and task description}
$\texttt{Prompt} \gets \texttt{InitializePrompt}$

$\texttt{RFlist} \gets []$

\For{$i \gets 1$ to $N$}{
    $\texttt{RF}_1, \dots, \texttt{RF}_K\gets \texttt{LLM}_{RF}(\texttt{Prompt}, K)$
    
    \While{any of $\texttt{RF}_1, \dots, \texttt{RF}_K$ is not executable}
    {$j_1, \dots, j_{K'} \gets$ \text{Index of non-executable reward functions}
    
    \tcp{Regenerate non-executable reward functions}
    $\texttt{RF}_{j_1}, \dots, \texttt{RF}_{j_K'}\gets \texttt{LLM}_{RF}(\texttt{Prompt}, K')$ 
    }
    
    \tcp{Render videos for sampled reward functions}
    $\texttt{Video}_1, \dots, \texttt{Video}_K\gets \texttt{Render}(\texttt{Env}, \texttt{RF}_1), \dots, \texttt{Render}(\texttt{Env}, \texttt{RF}_K)$
    
    \tcp{Human selects the most preferred and least preferred videos}
    $G, B \gets \texttt{Human}(\texttt{Video}_1, \dots, \texttt{Video}_K)$
    
    $\texttt{GoodRF}, \texttt{BadRF} \gets \texttt{RF}_{G}, \texttt{RF}_{B}$
    
    $\texttt{RFlist}.\textbf{append}(\texttt{GoodRF})$

    \tcp{Update prompt for feedback}
    $\texttt{Prompt} \gets \texttt{GoodRF}+\texttt{Feedback}{(\texttt{Env}, \texttt{GoodRF})} + \texttt{BadRF}+\texttt{Feedback}{(\texttt{Env}, \texttt{BadRF})} + \texttt{PreferencePrompt}$

    $\texttt{Prompt} \gets \texttt{Prompt} + \texttt{History}(\texttt{RFlist}, \texttt{Env}, \texttt{LLM}_{Diff})$
}
\caption{\name}
\label{algo: app-icpl}
\end{algorithm}

\subsection{Example} 
\label{app: imp}

We use a trial of the \textit{Humanoid} task to illustrate how {\name} progressively generated improved reward functions over successive iterations. The task description is ``to make the humanoid run as fast as possible''. Throughout five iterations, adjustments were made to the penalty terms and reward weightings. In the first iteration, the total reward was calculated as $0.5 \times \text{speed\_reward} + 0.25 \times \text{deviation\_reward} + 0.25 \times \text{action\_reward}$, yielding an RTS of 5.803. The speed reward and deviation reward motivate the humanoid to run fast, while the action reward promotes smoother motion.
In the second iteration, the weight of the speed reward was increased to 0.6, while the weights for deviation and action rewards were adjusted to 0.2 each, improving the RTS to 6.113. 
In the third iteration, the action penalty was raised and the reward weights were further modified to $0.7 \times \text{speed\_reward}$, $0.15 \times \text{deviation\_reward}$, and $0.15 \times \text{action\_reward}$, resulting in an RTS of 7.915.
During the fourth iteration, the deviation penalty was reduced to 0.35 and the action penalty was lowered, with the reward weights set to 0.8, 0.1, and 0.1 for speed, deviation, and action rewards, respectively. This change led to an RTS of 8.125.
Finally, in the fifth iteration, an additional upright reward term was incorporated, with the total reward calculated as $0.7 \times \text{speed\_reward} + 0.1 \times \text{deviation\_reward} + 0.1 \times \text{action\_reward} + 0.1 \times \text{upright\_reward}$. This adjustment produced the highest RTS of 8.232, allowing {\name} to generate reward functions that were more effectively aligned with the task description. Below are the specific reward functions produced at each iteration during one experiment. 

\begin{tcolorbox}[breakable, colback=gray!5!white, colframe=gray!75!black, title=Humanoid Task: Reward Function with highest RTS (5.803) of Iteration 1]
\begin{lstlisting}[language=Python]
def compute_reward(root_states: torch.Tensor, actions: torch.Tensor) -> Tuple[torch.Tensor, Dict[str, torch.Tensor]]:
    velocity = root_states[: , 7:10]
    forward_velocity = velocity[:, 0]
    target_velocity = 5.0
    deviation_penalty = 0.5
    action_penalty = 0.1

    # Measure how fast the humanoid is going
    speed_reward = torch.exp((forward_velocity - target_velocity))

    # Penalize deviation from the target velocity
    deviation = torch.abs(forward_velocity - target_velocity)
    deviation_reward = torch.exp(-deviation_penalty * deviation)

    # Penalize actions, to induce smoother motion
    sum_actions = torch.sum(actions ** 2, dim=1)
    action_reward = torch.exp(-action_penalty * sum_actions)

    rewards = {'speed_reward': speed_reward, 'deviation_reward': deviation_reward, 'action_reward': action_reward }

    total_reward = 0.5 * speed_reward + 0.25 * deviation_reward + 0.25 * action_reward
    total_reward = total_reward.to(deviation.device)

    return total_reward, rewards
\end{lstlisting}
\end{tcolorbox}

\begin{tcolorbox}[breakable, colback=gray!5!white, colframe=gray!75!black, title= Humanoid Task: Reward Function with highest RTS (6.113) of Iteration 2]
\begin{lstlisting}[language=Python]
def compute_reward(root_states: torch.Tensor, actions: torch.Tensor) -> Tuple[torch.Tensor, Dict[str, torch.Tensor]]:
    velocity = root_states[: , 7:10]
    forward_velocity = velocity[:, 0]
    target_velocity = 5.0
    deviation_penalty = 0.5
    action_penalty = 0.1

    # Measure how fast the humanoid is going
    speed_reward = torch.exp((forward_velocity - target_velocity))
    # Penalize deviation from the target velocity
    deviation = torch.abs(forward_velocity - target_velocity)
    deviation_reward = torch.exp(-deviation_penalty * deviation)
    # Penalize actions, to induce smoother motion
    sum_actions = torch.sum(actions ** 2, dim=1)
    action_reward = torch.exp(-action_penalty * sum_actions)

    rewards = {'speed_reward': speed_reward, 'deviation_reward': deviation_reward, 'action_reward': action_reward }

    # Adjusted weights in total reward calculation
    total_reward = 0.6 * speed_reward + 0.2 * deviation_reward + 0.2 * action_reward  # Changed weights
    total_reward = total_reward.to(deviation.device)

    return total_reward, rewards
\end{lstlisting}
\end{tcolorbox}

\begin{tcolorbox}[breakable, colback=gray!5!white, colframe=gray!75!black, title=Humanoid Task: Reward Function with highest RTS (7.915) of Iteration 3]
\begin{lstlisting}[language=Python]
def compute_reward(root_states: torch.Tensor, actions: torch.Tensor) -> Tuple[torch.Tensor, Dict[str, torch.Tensor]]:
    velocity = root_states[: , 7:10]
    forward_velocity = velocity[:, 0]
    target_velocity = 5.0
    deviation_penalty = 0.5
    action_penalty = 0.15  # Increased action penalty

    # Measure how fast the humanoid is going
    speed_reward = torch.exp((forward_velocity - target_velocity))
    # Penalize deviation from the target velocity
    deviation = torch.abs(forward_velocity - target_velocity)
    deviation_reward = torch.exp(-deviation_penalty * deviation)
    # Penalize actions, to induce smoother motion
    sum_actions = torch.sum(actions ** 2, dim=1)
    action_reward = torch.exp(-action_penalty * sum_actions)

    rewards = {'speed_reward': speed_reward, 'deviation_reward': deviation_reward, 'action_reward': action_reward }

    # Adjusted weights in total reward calculation
    total_reward = 0.7 * speed_reward + 0.15 * deviation_reward + 0.15 * action_reward  # Changed weights
    total_reward = total_reward.to(deviation.device)

    return total_reward, rewards
\end{lstlisting}
\end{tcolorbox}

\begin{tcolorbox}[breakable, colback=gray!5!white, colframe=gray!75!black, title=Humanoid Task: Reward Function with highest RTS (8.125) of Iteration 4]
\begin{lstlisting}[language=Python]
def compute_reward(root_states: torch.Tensor, actions: torch.Tensor) -> Tuple[torch.Tensor, Dict[str, torch.Tensor]]:
    velocity = root_states[:, 7:10]
    forward_velocity = velocity[:, 0]
    target_velocity = 5.0
    deviation_penalty = 0.35  # decrease penalty for deviation
    action_penalty = 0.1  # decrease penalty for action

    # Measure how fast the humanoid is going
    speed_reward = torch.exp((forward_velocity - target_velocity))
    # Penalize deviation from the target velocity
    deviation = torch.abs(forward_velocity - target_velocity)
    deviation_reward = torch.exp(-deviation_penalty * deviation)
    # Penalize actions, to induce smoother motion
    sum_actions = torch.sum(actions ** 2, dim=1)
    action_reward = torch.exp(-action_penalty * sum_actions)

    rewards = {'speed_reward': speed_reward, 'deviation_reward': deviation_reward, 'action_reward': action_reward }

    # Adjust the weights in total reward calculation
    total_reward = 0.8 * speed_reward + 0.1 * deviation_reward + 0.1 * action_reward # Changed weights
    total_reward = total_reward.to(deviation.device)

    return total_reward, rewards
\end{lstlisting}
\end{tcolorbox}

\begin{tcolorbox}[breakable, colback=gray!5!white, colframe=gray!75!black, title=Humanoid Task: Reward Function with highest RTS (8.233) of Iteration 5]
\begin{lstlisting}[language=Python]
def compute_reward(root_states: torch.Tensor, actions: torch.Tensor, up_vec: torch.Tensor) -> Tuple[torch.Tensor, Dict[str, torch.Tensor]]:
    velocity = root_states[: , 7:10]
    forward_velocity = velocity[:, 0]
    target_velocity = 5.0
    deviation_penalty = 0.35  # Reduced deviation penalty
    action_penalty = 0.1  # Decreased action penalty

    # Measure how fast the humanoid is going
    speed_reward = torch.exp((forward_velocity - target_velocity))
    # Penalize deviation from the target velocity
    deviation = torch.abs(forward_velocity - target_velocity)
    deviation_reward = torch.exp(-deviation_penalty * deviation)
    # Penalize actions, to induce smoother motion
    sum_actions = torch.sum(actions ** 2, dim=1)
    action_reward = torch.exp(-action_penalty * sum_actions)
    # Reward for maintaining an upright position
    upright_penalty = 1.0  # New upright penalty for the humanoid
    upright_reward = torch.exp(-upright_penalty * (1 - up_vec[:, 2]))  # Added upright reward

    rewards = {'speed_reward': speed_reward, 'deviation_reward': deviation_reward, 'action_reward': action_reward, 'upright_reward': upright_reward }

    # Adjusted weights in total reward calculation
    total_reward = 0.7 * speed_reward + 0.1 * deviation_reward + 0.1 * action_reward + 0.1 * upright_reward  # Added upright reward to total
    total_reward = total_reward.to(deviation.device)

    return total_reward, rewards
\end{lstlisting}
\end{tcolorbox}

\section{Full Prompts}
\subsection{Full Prompts on Embodied AI tasks}
\label{app: prompt}
The prompts used in {\name} for synthesizing reward functions in Embodied AI tasks are presented in Prompts \ref{prompt: 1}, \ref{prompt: 2}, and \ref{prompt: 3}. The prompt for generating the differences between various reward functions is shown in Prompt \ref{prompt: 4}.
\definecolor{codegray}{rgb}{0.5,0.5,0.5}
\definecolor{backcolour}{RGB}{245,245,245}

\lstdefinestyle{mystyle}{
    backgroundcolor=\color{backcolour},   
    commentstyle=\color{magenta},
    keywordstyle=\color{blue},
    numberstyle=\tiny\color{codegray},
    numbers=none,
    basicstyle=\fontfamily{\ttdefault}
    \footnotesize,
    breakatwhitespace=false,         
    breaklines=true,                 
    keepspaces=true,    
    frame=single,
    numbersep=5pt,                  
    showspaces=false,                
    showstringspaces=false,
    showtabs=false,                  
    tabsize=2,
    classoffset=1, %
    keywordstyle=\color{violet},
    classoffset=0,
}
\lstset{style=mystyle}
\renewcommand\lstlistingname{Prompt}

\lstinputlisting[basicstyle=\fontfamily{\ttdefault}\scriptsize, breaklines=true,caption={Initial System Prompts of Synthesizing Reward Functions}\label{prompt: 1}]{prompts/initial_system.txt}
\lstinputlisting[basicstyle=\fontfamily{\ttdefault}\scriptsize, breaklines=true,caption={Feedback Prompts}\label{prompt: 2}]{prompts/reward_feedback.txt}
\lstinputlisting[basicstyle=\fontfamily{\ttdefault}\scriptsize, breaklines=true,caption={Prompts of Tips for Writing Reward Functions}\label{prompt: 3}]{prompts/code_output_tip.txt}
\lstinputlisting[basicstyle=\fontfamily{\ttdefault}\scriptsize, breaklines=true,caption={Prompts of Describing Differences}\label{prompt: 4}]{prompts/difference_initial_system.txt}

\subsection{IMO/CPHO/IOI System Prompt}
\label{app:icpl-system-prompt}

Below we provide the complete system prompt used to guide the Gemini LLM to generate appropriate IMO/CPHO/IOI solutions, perform major vote and choose the best and the worst response.

\subsubsection*{System prompt 1}
\begin{lstlisting}[language=,caption={IMO CoT system prompt},frame=single]
-- BEGIN SYSTEM PROMPT --

"""
### Core Instructions ###

*   **Rigor is Paramount:** Your primary goal is to produce a complete and rigorously justified solution. Every step in your solution must be logically sound and clearly explained. A correct final answer derived from flawed or incomplete reasoning is considered a failure.
*   **Honesty About Completeness:** If you cannot find a complete solution, you must **not** guess or create a solution that appears correct but contains hidden flaws or justification gaps. Instead, you should present only significant partial results that you can rigorously prove. A partial result is considered significant if it represents a substantial advancement toward a full solution. Examples include:
    *   Proving a key lemma.
    *   Fully resolving one or more cases within a logically sound case-based proof.
    *   Establishing a critical property of the mathematical objects in the problem.
    *   For an optimization problem, proving an upper or lower bound without proving that this bound is achievable.
*   **Use TeX for All Mathematics:** All mathematical variables, expressions, and relations must be enclosed in TeX delimiters (e.g., `Let $n$ be an integer.`).

### Output Format ###

Your response MUST be structured into the following sections, in this exact order.

*** Final Answer ***

[Your final answer here](You should provide only the final answer here, without any explanation or reasoning.)

*** Reasoning ***

**1. Summary**

Provide a concise overview of your findings. This section must contain two parts:

*   **a. Verdict:** State clearly whether you have found a complete solution or a partial solution.
    *   **For a complete solution:** State the final answer, e.g., "I have successfully solved the problem. The final answer is..."
    *   **For a partial solution:** State the main rigorous conclusion(s) you were able to prove, e.g., "I have not found a complete solution, but I have rigorously proven that..."
*   **b. Method Sketch:** Present a high-level, conceptual outline of your solution. This sketch should allow an expert to understand the logical flow of your argument without reading the full detail. It should include:
    *   A narrative of your overall strategy.
    *   The full and precise mathematical statements of any key lemmas or major intermediate results.
    *   If applicable, describe any key constructions or case splits that form the backbone of your argument.

**2. Detailed Solution**

Present the full, step-by-step mathematical proof. Each step must be logically justified and clearly explained. The level of detail should be sufficient for an expert to verify the correctness of your reasoning without needing to fill in any gaps. This section must contain ONLY the complete, rigorous proof, free of any internal commentary, alternative approaches, or failed attempts.

### Self-Correction Instruction ###

Before finalizing your output, carefully review your "Method Sketch" and "Detailed Solution" to ensure they are clean, rigorous, and strictly adhere to all instructions provided above. Verify that every statement contributes directly to the final, coherent mathematical argument.

"""


-- END SYSTEM PROMPT --
\end{lstlisting}

\subsubsection*{System prompt 2}
\begin{lstlisting}[language=,caption={Iterative refinement in 3D Scaling system prompt},frame=single]
-- BEGIN SYSTEM PROMPT --

"""You are an expert problem solver.
Your task is to carefully read the problem statement and reflect on two previous solutions.
- previous_output1 is a relatively better attempt, but it may contain mistakes or gaps.
- previous_output2 is a weaker attempt, which might include irrelevant reasoning or errors.

Your job:
1. Identify the strengths and weaknesses of both solutions.
2. Combine the strengths and correct the weaknesses.
3. Produce a new, improved solution that is clearer, more accurate, and better structured.

Make sure the final answer is complete and stands alone as a polished solution."""

-- END SYSTEM PROMPT --


-- BEGIN QUESTION PROMPT --
"""
Problem Statement:
{problem_statement}

Better Attempt (previous_output1):
{previous_output1}

Weaker Attempt (previous_output2):
{previous_output2}
"""

-- END QUESTION PROMPT --
\end{lstlisting}

\subsubsection*{System prompt 3}
\begin{lstlisting}[language=,caption={Batchsize Comparition system prompt},frame=single]
-- BEGIN SYSTEM PROMPT --

"""
You are an expert judge. You will be given a problem statement and a numbered list of candidate solutions.
Your task is to select the single best solution and output only its 0-based index (an integer between 0 and N-1), with no extra text or explanation.
Judge by accuracy first, then completeness and clarity. If multiple are equally good, pick one deterministically (prefer lower index).
Output must be exactly one integer and nothing else.
"""

-- END SYSTEM PROMPT --


-- BEGIN QUESTION PROMPT --

"""
Problem statement:
{problem_statement}
Candidates:
{results}
Please output the 0-based index of the single best candidate.
""" 

-- END QUESTION PROMPT --

\end{lstlisting}

\subsubsection*{System prompt 4}
\begin{lstlisting}[language=,caption={Majority Vote system prompt},frame=single]
-- BEGIN SYSTEM PROMPT --

"""
You are a professional mathematical answer consistency expert. Your task is to analyze a set of mathematical answers, identify answers that are essentially the same, and find the most frequently occurring answer(s) (the mode).

# Core Principles
The criterion for judging whether two answers are the same is: whether they are mathematically equivalent, not whether the strings are exactly the same.

# Equivalence Rules
1. **Numerical equivalence**: 0.75 = 3/4 = 75% = \\frac{3}{4} = "three quarters"
2. **Algebraic expression equivalence**: 2x + 3 = 3 + 2x = (4x + 6)/2
3. **Set equivalence**: {1, 2, 3} = {3, 2, 1} = {x | x \\in {1,2,3}}
4. **Interval equivalence**: (0,1) = {x | 0 < x < 1} = "open interval from 0 to 1"
5. **Function equivalence**: f(x) = x^{2} = x*x = x^2
6. **Geometric equivalence**: "right triangle" = "triangle with a 90 degree angle"
7. **Logical equivalence**: true = "correct"

# Handling natural language answers
For answers containing explanations, extract the core mathematical content:
- "The answer is 3/4 because..." -> extract "\\frac{3}{4}"
- "I think it should be 2\\pi" -> extract "2\\pi"
- "The area of this triangle is 12 square centimeters" -> extract "12"

# Output requirements
1. **Return only the mode answer(s)**, no explanation
2. **Return in the most concise standard form** (prefer mathematical symbols)
3. **If there are multiple modes** (same highest frequency), separate them with commas
4. **Keep original format**: if it's a set, return in set form; if interval, return interval form

# Examples
Input: ["0.75", "3/4", "75%", "The answer is three quarters"]
Output: \\frac{3}{4}

Input: ["{1,2,3}", "{3,1,2}", "set contains 1,2,3"]
Output: {1,2,3}

Input: ["(0,\\infty)", "x>0", "positive real numbers"]
Output: (0,\\infty)

Input: ["2", "2.0", "two", "The answer is 2"]
Output: 2
"""



-- END SYSTEM PROMPT --
\end{lstlisting}





\subsubsection*{System prompt 5}
\begin{lstlisting}[language=,caption={CPHO CoT system prompt},frame=single]
-- BEGIN SYSTEM PROMPT --

"""
You are a professional physicist with expertise in solving high school and undergraduate level physics problems. Your task is to provide a complete, rigorous, and well-justified solution to the given physics problem.

### Core Instructions ###

*   **Complete Coverage is Paramount:** Your primary goal is to produce a complete and rigorously justified solution for every sub-question (each marked with `\item` in the problem statement). You must answer all sub-questions in the order they are presented. Do not skip any sub-question or terminate early after answering only a subset. Each sub-question's solution must be logically sound, physically accurate, and clearly explained.
*   **Rigor and Detail:** For each sub-question, provide a step-by-step detailed process that includes all reasoning, calculations, and physical principles applied. All mathematical variables, expressions, and relations must be enclosed in TeX delimiters (e.g., `$F = ma$`). Ensure that units, dimensions, and significant figures are handled appropriately where relevant.
*   **Honesty About Completeness:** If you cannot solve a sub-question completely, you must not guess or create an answer that appears correct but contains flaws. Instead, present any partial results you can rigorously justify, and clearly indicate which sub-question remains unsolved or partially solved. A partial result should represent a substantial advancement, such as deriving a key equation or setting up a correct problem framework.
*   **Final Answers Listing:** After completing the detailed solutions for all sub-questions, you must list all final answers in order at the very end of your response. This listing should include only the answers (e.g., numerical values, expressions, or conclusions), without the detailed processes.
*   **Please notice:** If there is a sub-question marked as "key sub-question", the final answer to that sub-question should be highlighted as the "Key Final Answer" in your final answers listing. If there is not such a sub-question, please treat the last sub-question as the key one. Your output should follow the structure below.

### Output Format ###

Your response MUST be structured into the following sections, in this exact order.


*** Key Final Answer ***
[The Key Final Answer]
(In this section, provide the final answer of the key sub-question only. In the problem statement part, there would be a sub-question marked as "key sub-question". 
If there is no such sub-question, please list the final answer of the last sub-question in this section.)


*** All Final Answers ***
List all final answers in order, corresponding to each sub-question. This section should be concise and contain only the answers, formatted as:

*   Sub-question 1: [Answer]
*   Sub-question 2: [Answer]
*   ... and so on for all sub-questions.

*** Reasoning ***

Present the full, step-by-step solutions for each sub-question in sequence. For each sub-question:

*   Start with a clear heading indicating the sub-question number or label (e.g., "**Sub-question 1:**").
*   Provide a rigorous and detailed solution, including all reasoning, calculations, and explanations. Use TeX for mathematics.
*   Ensure that each step is justified physically and mathematically. If a sub-question builds on previous answers, reference them appropriately.
*   Do not include commentary on alternative approaches or failed attempts-only the coherent argument for each sub-question.

### Self-Correction Instruction ###

Before finalizing your output, carefully review your response to ensure:
- All sub-questions have been addressed in the order presented, with no omissions.
- Each detailed solution is complete, rigorous, and free of gaps.
- The final answers are accurately derived and listed correctly at the end.
- The output adheres strictly to this format and instructions.


-- END SYSTEM PROMPT --
\end{lstlisting}

\subsubsection*{System prompt 6}
\begin{lstlisting}[language=,caption={IOI CoT system prompt},frame=single]
-- BEGIN SYSTEM PROMPT --

"""

### Core Instructions ###

*   **Rigor is Paramount:** Your primary goal is to produce a **fully correct and executable** C++ code. The code must handle all valid inputs defined in the problem statement and must explicitly deal with edge cases.  You should also provide a detailed explanation of your algorithm in your code to demonstrate your main method and why it is correct.
*   **Honesty About Completeness:** If you cannot provide a complete, correct code implementation, you must not guess or conceal flaws. Instead, present only the significant partial results that you can rigorously justify. For example:
    - A code that can solve subtasks with the highest total score, you should make sure its correct and provide its main algorithm.
    - A possible algorithm direction that can solve the whole problem although you do not implement it correctly.
    - A correct implementation of a critical function or subroutine.
*   **Rule for Function Call:** If the problem involves invoking functions that you are not required to implement, you must ensure that every invocation strictly adheres to the problem's specifications; otherwise, your code will be deemed invalid. Each invocation may alter the state of the data in ways that affect your objectives, and once made, such calls cannot be undone
*   **Use TeX for All Mathematics:** All mathematical variables, expressions, and relations in your algorithm must be enclosed in TeX delimiters (e.g., `Let $n$ be an integer.`).
*   **Code Format**: Your code should read the inputs from stdin solve the problem and write the answer to stdout (do not directly test on the sample inputs). Enclose your code within delimiters as follows. Ensure your c++ program contains the function requrired in the problem statement.\n```cpp\n// YOUR CODE HERE\n```"

### Output Format ###

Your response MUST be structured into the following sections, in this exact order.

**1. Summary**

Provide a concise overview of your findings. This section must contain two parts:

*   **a. Verdict:** State clearly whether you have found a complete solution or a partial solution.
    *   **For a complete solution:** State the final code, e.g., "I have successfully solved the problem. The final code is ..."
    *   **For a partial solution:** State the partial code you now have, e.g., "I have not found a complete solution, but I have a code that can solve subtasks with the highest total score, the code is ```cpp ... ```"
*   **b. Method Sketch:** Present a high-level, conceptual outline of your algorithm. This sketch should allow an expert to understand the main algorithm of your argument without reading the full detail.

**2. Detailed Solution**

Present the full, step-by-step explanation of your code.  
If your algorithm requires some proof on complexity or correctness, you should also provide the proof.
If your answer contains algorithms that can solve subtasks, you should also describe them.
The level of detail should be sufficient for an expert to verify the correctness of your code without needing to test it in testcase.

**3. Final Code**

Present your final code for the problem again. Place the solution inside one fenced code block (### Answer: (use the provided format with backticks)```cpp ...```").

### Self-Correction Instruction ###

Before finalizing your output, carefully review your code and algorithm.  
Fix any bugs, make sure the code is executable.


-- END SYSTEM PROMPT --
\end{lstlisting}

\subsubsection*{System prompt 7}
\begin{lstlisting}[language=,caption={IOI Batchsize Comparison system prompt},frame=single]
-- BEGIN SYSTEM PROMPT --

""" 
You are an expert in evaluating C++ programming solutions. Your task is to select the single best solution from several provided options based on the following criteria:

1. **Accuracy**: Prioritize solutions that solve the problem with the most correct answers and achieve the highest possible scores on subtasks.
2. **Completeness**: Consider solutions that handle edge cases effectively, ensure they cover all aspects of the problem and their time complexity is efficient enough.
3. **Clarity and Extensibility**: Evaluate the solution for clear, improvable code. Prefer solutions that are easy to extend and improve to cover more substasks.
4. **Algorithm Efficiency**: Prefer solutions with optimal time and space complexity that can scale well for larger inputs.

Choose the best solution based on these aspects and output the number of the solution you believe is the best. **If two solutions are equally good, select the one that is more accurate and complete**.

Your output should strictly follow these rules:
1. Output only the number of the best solution (starting from 1).
2. Do not output any reasoning, explanations, or extra text.
output format:
"Solution 1" or "Solution 2" or ... (just output one number)
Your output must be exactly the number of the best solution.
"""

-- END SYSTEM PROMPT --

-- BEGIN QUESTION PROMPT --

"""
Problem statement:
{problem_statement}
Candidates:
{results}
Please output only the number of the best solution (starting from 1):
""" 

-- END QUESTION PROMPT --


\end{lstlisting}

\makeatletter
\@ifpackageloaded{listings}{}{\usepackage{listings}}
\makeatother

\section{Human-in-the-loop Preference on Innovative Tasks}
\label{sec: innovative-human}

\subsection{Demographic Data}
The participants in the human-in-the-loop preference experiments on Embodied AI Tasks consisted of 7 individuals aged 19 to 30, including 2 women and 5 men. Their educational backgrounds included 2 undergraduate students and 5 graduate students. The 20 volunteers recruited to evaluate the performance of different methods were aged 23 to 28, comprising 5 women and 15 men, with 3 undergraduates and 17 graduate students.

\subsection{IsaacGym Tasks}
\label{sec: app-human}

We evaluate human-in-the-loop preference experiments on tasks in IsaacGym, including \textit{Quadcopter, Humanoid, Ant, ShadowHand, and AllegroHand}. In these experiments, volunteers were limited to comparing reward functions based solely on videos showcasing the final policies derived from each reward function.

In the \textit{Quadcopter} task, humans evaluate performance by observing whether the quadcopter moves quickly and efficiently, and whether it stabilizes in the final position. For the \textit{Humanoid} and \textit{Ant} tasks, where the task description is "make the ant/humanoid run as fast as possible," humans estimate speed by comparing the time taken to cover the same distance and assessing the movement posture. However, due to the variability in movement postures and directions, estimating speed can introduce inaccuracies. In the \textit{ShadowHand} and \textit{AllegroHand} tasks, where the goal is ``to make the hand spin the object to a target orientation,'' Humans find it challenging to calculate the precise difference between the current orientation and the target orientation at every moment, even though the target orientation is displayed nearby. Nevertheless, humans still can estimate the duration of effective rotations with the target orientation in the video, thus evaluating the performance of a single spin. Since the target orientation regenerates upon being reached, the frequency of target orientation changes can also aid in facilitating the assessment of evaluating performance.

Due to the lack of precise environmental data, volunteers cannot make absolutely accurate judgments during the experiments. For instance, in the \textit{Humanoid} task, robots may move in varying directions, which can introduce biases in volunteers' assessments of speed. However, volunteers are still able to filter out extremely poor results and select videos with relatively better performance. In most cases, the selected results closely align with those derived from proxy human preferences, enabling effective improvements in task performance.

Below is a specific case from the \textit{Humanoid} task that illustrates the potential errors humans may make during evaluation and the learning process of the reward function under this assumption. The reward task scores (RTS) chosen by the volunteer across five iterations are $4.521, 6.069, 6.814, 6.363, 6.983$. 

In the first iteration, the ground-truth task scores of each policy were $0.593, 2.744, 4.520, 0.192, 2.517, 5.937$, although the volunteer was unaware of these scores. Initially, the volunteer eliminated policies 0 and 3, as the robots in those videos primarily exhibited spinning behavior. Subsequently, the volunteer assessed the speed of the remaining robots based on how quickly a specific robot moved out of the field. The volunteer correctly identified that the robots in policies 1 and 4 were slightly slower. However, due to minor differences in the movement directions of the robots in policies 2 and 5, the volunteer mistakenly selected policy 2 as the best option, incorrectly concluding that the robot in policy 2 was faster.

Thus, the reward function selected in iteration 1 consists of several key components: velocity reward, upright reward, force penalty, unnatural pose penalty, and action penalty. These components not only promote faster training, which is the primary objective, but also encourage the maintenance of an upright pose. Additionally, the function penalizes excessive force usage, extreme joint angles, and large action values to foster smoother and more controlled movements.

In subsequent iterations, the volunteer effectively identified reward functions that exhibited relatively better and worse performance outcomes. Adjustments were made to the weights of each component, and specific temperature values were introduced for each. These modifications resulted in a more balanced reward structure, ensuring that critical aspects exert a stronger influence, thereby allowing for greater control over the learning dynamics and improving the agent's performance in achieving the task. Even in Iteration 4, the volunteer did not select the reward function with the highest RTS (6.813) but instead opted for the second-highest reward function (RTS = 6.363). Nevertheless, the reward function exhibited consistent improvement during these iterations.

Here we show the full reward function during the process. 

\begin{tcolorbox}[breakable, colback=gray!5!white, colframe=gray!75!black, title=Humanoid Task: Reward Function chosen by volunteer with RTS (4.521) of Iteration 1]
\begin{lstlisting}[language=Python]
def compute_reward(
    velocity: torch.Tensor,
    dof_pos: torch.Tensor,
    dof_force_tensor: torch.Tensor,
    actions: torch.Tensor,
    up_vec: torch.Tensor,
    heading_vec: torch.Tensor
) -> Tuple[torch.Tensor, Dict[str, torch.Tensor]]:
    # Reward for forward velocity (run as fast as possible)
    velocity_reward = velocity[:, 0]
    
    # Encouragement for upright pose (penalize for deviation from vertical)
    upright_reward = up_vec[:, 2]
    
    # Penalize high force usage (energy efficiency)
    force_penalty = torch.sum(torch.abs(dof_force_tensor), dim=1)
    
    # Penalize unnatural joint positions (for instance, avoid extreme angles)
    unnatural_pose_penalty = torch.sum(torch.abs(dof_pos), dim=1)
    
    # Penalize for large actions (to promote smoother movement)
    action_penalty = torch.sum(torch.abs(actions), dim=1)
    
    # Normalize the rewards and penalties
    velocity_reward = torch.exp(velocity_reward) - 1
    upright_reward = torch.exp(upright_reward) - 1
    temperature = 1.0
    force_penalty = torch.exp(-force_penalty / temperature)
    unnatural_pose_penalty = torch.exp(-unnatural_pose_penalty / temperature)
    action_penalty = torch.exp(-action_penalty / temperature)
    
    # Combine the rewards and penalties into a single reward
    total_reward = (
        velocity_reward +
        0.5 * upright_reward -
        0.01 * force_penalty -
        0.01 * unnatural_pose_penalty -
        0.01 * action_penalty
    )
    
    # Return the total reward and each component for analysis
    reward_components = {
        "velocity_reward": velocity_reward,
        "upright_reward": upright_reward,
        "force_penalty": force_penalty,
        "unnatural_pose_penalty": unnatural_pose_penalty,
        "action_penalty": action_penalty
    }
    
    return total_reward, reward_components

\end{lstlisting}
\end{tcolorbox}

\begin{tcolorbox}[breakable, colback=gray!5!white, colframe=gray!75!black, title= Humanoid Task: Reward Function chosen by volunteer with RTS (6.069) of Iteration 2]
\begin{lstlisting}[language=Python]
def compute_reward(
    velocity: torch.Tensor,
    dof_pos: torch.Tensor,
    dof_force_tensor: torch.Tensor,
    actions: torch.Tensor,
    up_vec: torch.Tensor,
    heading_vec: torch.Tensor
) -> Tuple[torch.Tensor, Dict[str, torch.Tensor]]:
    # Reward for forward velocity (run as fast as possible)
    velocity_reward = velocity[:, 0]
    velocity_temperature = 1.2  # increased slightly
    velocity_reward = torch.exp(velocity_reward / velocity_temperature) - 1
    
    # Encouragement for upright pose (penalize for deviation from vertical)
    upright_reward = up_vec[:, 2]
    upright_temperature = 0.5  # introduce a specific temperature
    upright_reward = torch.exp(upright_reward / upright_temperature) - 1
    
    # Penalize high force usage (energy efficiency)
    force_penalty = torch.sum(torch.abs(dof_force_tensor), dim=1)
    force_temperature = 0.1  # decreased to make it more significant
    force_penalty = torch.exp(-force_penalty / force_temperature)
    
    # Penalize unnatural joint positions (for instance, avoid extreme angles)
    unnatural_pose_penalty = torch.sum(torch.abs(dof_pos), dim=1)
    pose_temperature = 0.1  # decreased to make it more significant
    unnatural_pose_penalty = torch.exp(-unnatural_pose_penalty / pose_temperature)
    
    # Penalize for large actions (to promote smoother movement)
    action_penalty = torch.sum(torch.abs(actions), dim=1)
    action_temperature = 0.1  # decreased to make it more significant
    action_penalty = torch.exp(-action_penalty / action_temperature)
    
    # Combine the rewards and penalties into a single reward
    total_reward = (
        velocity_reward +
        0.5 * upright_reward -
        0.01 * force_penalty -
        0.01 * unnatural_pose_penalty -
        0.01 * action_penalty
    )
    
    # Return the total reward and each component for analysis
    reward_components = {
        "velocity_reward": velocity_reward,
        "upright_reward": upright_reward,
        "force_penalty": force_penalty,
        "unnatural_pose_penalty": unnatural_pose_penalty,
        "action_penalty": action_penalty
    }
    
    return total_reward, reward_components

\end{lstlisting}
\end{tcolorbox}

\begin{tcolorbox}[breakable, colback=gray!5!white, colframe=gray!75!black, title= Humanoid Task: Reward Function chosen by volunteer with RTS (6.814) of Iteration 3]
\begin{lstlisting}[language=Python]
def compute_reward(
    velocity: torch.Tensor,
    dof_pos: torch.Tensor,
    dof_force_tensor: torch.Tensor,
    actions: torch.Tensor,
    up_vec: torch.Tensor,
    heading_vec: torch.Tensor
) -> Tuple[torch.Tensor, Dict[str, torch.Tensor]]:
    # Reward for forward velocity (run as fast as possible)
    velocity_reward = velocity[:, 0]
    velocity_temperature = 1.1  # minor adjustment
    velocity_reward = torch.exp(velocity_reward / velocity_temperature) - 1
    
    # Encouragement for upright pose (penalize for deviation from vertical)
    upright_reward = up_vec[:, 2]
    upright_temperature = 0.6  # slight adjustment
    upright_reward = torch.exp(upright_reward / upright_temperature) - 1
    
    # Penalize high force usage (energy efficiency)
    force_penalty = torch.sum(torch.abs(dof_force_tensor), dim=1)
    force_temperature = 0.15  # increased to try to make it effective
    force_penalty = torch.exp(-force_penalty / force_temperature)
    
    # Penalize unnatural joint positions (for instance, avoid extreme angles)
    unnatural_pose_penalty = torch.sum(torch.abs(dof_pos), dim=1)
    pose_temperature = 0.2  # increased to try to make it effective
    unnatural_pose_penalty = torch.exp(-unnatural_pose_penalty / pose_temperature)
    
    # Penalize for large actions (to promote smoother movement)
    action_penalty = torch.sum(torch.abs(actions), dim=1)
    action_temperature = 0.2  # increased to try to make it effective
    action_penalty = torch.exp(-action_penalty / action_temperature)
    
    # Combine the rewards and penalties into a single reward
    total_reward = (
        velocity_reward +
        0.5 * upright_reward -
        0.02 * force_penalty -  # increased slightly for more impact
        0.02 * unnatural_pose_penalty -  # increased slightly for more impact
        0.02 * action_penalty  # increased slightly for more impact
    )
    
    # Return the total reward and each component for analysis
    reward_components = {
        "velocity_reward": velocity_reward,
        "upright_reward": upright_reward,
        "force_penalty": force_penalty,
        "unnatural_pose_penalty": unnatural_pose_penalty,
        "action_penalty": action_penalty
    }
    
    return total_reward, reward_components

\end{lstlisting}
\end{tcolorbox}

\begin{tcolorbox}[breakable, colback=gray!5!white, colframe=gray!75!black, title= Humanoid Task: Reward Function chosen by volunteer with RTS (6.363) of Iteration 4]
\begin{lstlisting}[language=Python]
def compute_reward(
    velocity: torch.Tensor,
    dof_pos: torch.Tensor,
    dof_force_tensor: torch.Tensor,
    actions: torch.Tensor,
    up_vec: torch.Tensor,
    heading_vec: torch.Tensor
) -> Tuple[torch.Tensor, Dict[str, torch.Tensor]]:
    # Reward for forward velocity (run as fast as possible)
    velocity_reward = velocity[:, 0]
    velocity_temperature = 1.05  # slight adjustment to refine the impact
    velocity_reward = torch.exp(velocity_reward / velocity_temperature) - 1
    
    # Encouragement for upright pose (penalize for deviation from vertical)
    upright_reward = up_vec[:, 2]
    upright_temperature = 0.65  # slight loosening for more upright reward
    upright_reward = torch.exp(upright_reward / upright_temperature) - 1
    
    # Penalize high force usage (energy efficiency)
    force_penalty = torch.sum(torch.abs(dof_force_tensor), dim=1)
    force_temperature = 0.2  # increased to make it more significant
    force_penalty = torch.exp(-force_penalty / force_temperature)
    
    # Penalize unnatural joint positions (for instance, avoid extreme angles)
    unnatural_pose_penalty = torch.sum(torch.abs(dof_pos), dim=1)
    pose_temperature = 0.25  # slight increase to make this component effective
    unnatural_pose_penalty = torch.exp(-unnatural_pose_penalty / pose_temperature)
    
    # Penalize for large actions (to promote smoother movement)
    action_penalty = torch.sum(torch.abs(actions), dim=1)
    action_temperature = 0.25  # slightly adjusted for more prominent constraint
    action_penalty = torch.exp(-action_penalty / action_temperature)
    
    # Combine the rewards and penalties into a single reward
    total_reward = (
        velocity_reward +
        0.5 * upright_reward -
        0.015 * force_penalty -  # slight increase for more impact
        0.015 * unnatural_pose_penalty -  # slight increase for more impact
        0.015 * action_penalty  # slight increase for more impact
    )
    
    # Return the total reward and each component for analysis
    reward_components = {
        "velocity_reward": velocity_reward,
        "upright_reward": upright_reward,
        "force_penalty": force_penalty,
        "unnatural_pose_penalty": unnatural_pose_penalty,
        "action_penalty": action_penalty
    }
    
    return total_reward, reward_components

\end{lstlisting}
\end{tcolorbox}

\begin{tcolorbox}[breakable, colback=gray!5!white, colframe=gray!75!black, title= Humanoid Task: Reward Function with best RTS (6.813) of Iteration 4(not chosen by volunteer)]
\begin{lstlisting}[language=Python]
def compute_reward(
    velocity: torch.Tensor,
    dof_pos: torch.Tensor,
    dof_force_tensor: torch.Tensor,
    actions: torch.Tensor,
    up_vec: torch.Tensor,
    heading_vec: torch.Tensor
) -> Tuple[torch.Tensor, Dict[str, torch.Tensor]]:
    # Reward for forward velocity (run as fast as possible)
    velocity_reward = velocity[:, 0]
    velocity_temperature = 1.15
    velocity_reward = torch.exp(velocity_reward / velocity_temperature) - 1
    
    # Encouragement for upright pose (penalize for deviation from vertical)
    upright_reward = up_vec[:, 2]
    upright_temperature = 0.55
    upright_reward = torch.exp(upright_reward / upright_temperature) - 1
    
    # Penalize high force usage (energy efficiency)
    force_penalty = torch.sum(torch.abs(dof_force_tensor), dim=1)
    force_temperature = 0.12
    force_penalty = torch.exp(-force_penalty / force_temperature)
    
    # Penalize unnatural joint positions (for instance, avoid extreme angles)
    unnatural_pose_penalty = torch.sum(torch.abs(dof_pos), dim=1)
    pose_temperature = 0.18
    unnatural_pose_penalty = torch.exp(-unnatural_pose_penalty / pose_temperature)
    
    # Penalize for large actions (to promote smoother movement)
    action_penalty = torch.sum(torch.abs(actions), dim=1)
    action_temperature = 0.18
    action_penalty = torch.exp(-action_penalty / action_temperature)
    
    # Combine the rewards and penalties into a single reward
    total_reward = (
        velocity_reward +
        0.5 * upright_reward -
        0.02 * force_penalty -
        0.02 * unnatural_pose_penalty -
        0.02 * action_penalty
    )
    
    # Return the total reward and each component for analysis
    reward_components = {
        "velocity_reward": velocity_reward,
        "upright_reward": upright_reward,
        "force_penalty": force_penalty,
        "unnatural_pose_penalty": unnatural_pose_penalty,
        "action_penalty": action_penalty
    }
    
    return total_reward, reward_components


\end{lstlisting}
\end{tcolorbox}

\begin{tcolorbox}[breakable, colback=gray!5!white, colframe=gray!75!black, title= Humanoid Task: Reward Function chosen by volunteer with RTS (6.983) of Iteration 5]
\begin{lstlisting}[language=Python]
def compute_reward(
    velocity: torch.Tensor,
    dof_pos: torch.Tensor,
    dof_force_tensor: torch.Tensor,
    actions: torch.Tensor,
    up_vec: torch.Tensor,
    heading_vec: torch.Tensor
) -> Tuple[torch.Tensor, Dict[str, torch.Tensor]]:
    # Adjusted parameters based on analysis
    velocity_temperature = 1.1
    upright_temperature = 0.65
    force_temperature = 0.25
    pose_temperature = 0.3
    action_temperature = 0.3

    # Reward for forward velocity (run as fast as possible)
    velocity_reward = velocity[:, 0]
    velocity_reward = torch.exp(velocity_reward / velocity_temperature) - 1
    
    # Encouragement for upright pose (penalize for deviation from vertical)
    upright_reward = up_vec[:, 2]
    upright_reward = torch.exp(upright_reward / upright_temperature) - 1
    
    # Penalize high force usage (energy efficiency)
    force_penalty = torch.sum(torch.abs(dof_force_tensor), dim=1)
    force_penalty = torch.exp(-force_penalty / force_temperature)
    
    # Penalize unnatural joint positions (for instance, avoid extreme angles)
    unnatural_pose_penalty = torch.sum(torch.abs(dof_pos), dim=1)
    unnatural_pose_penalty = torch.exp(-unnatural_pose_penalty / pose_temperature)
    
    # Penalize for large actions (to promote smoother movement)
    action_penalty = torch.sum(torch.abs(actions), dim=1)
    action_penalty = torch.exp(-action_penalty / action_temperature)
    
    # Combine the rewards and penalties into a single reward
    total_reward = (
        velocity_reward +
        0.5 * upright_reward -
        0.02 * force_penalty -
        0.02 * unnatural_pose_penalty -
        0.02 * action_penalty
    )
    
    # Return the total reward and each component for analysis
    reward_components = {
        "velocity_reward": velocity_reward,
        "upright_reward": upright_reward,
        "force_penalty": force_penalty,
        "unnatural_pose_penalty": unnatural_pose_penalty,
        "action_penalty": action_penalty
    }
    
    return total_reward, reward_components

\end{lstlisting}
\end{tcolorbox}

\subsection{HumanoidJump Task} 
\label{app: jump}
In our study, we introduced a novel task: \textit{HumanoidJump}, with the task description being ``to make humanoid jump like a real human.'' The prompt of environment context in this task is shown in Prompt \ref{prompt: jump}.
\lstinputlisting[basicstyle=\fontfamily{\ttdefault}\scriptsize, breaklines=true,caption={Prompts of Environment Context in \textit{HumanoidJump} Task}\label{prompt: jump}]{prompts/jump.txt}













\textbf{Reward functions.} We show the reward functions in a trial that successfully evolved a human-like jump: bending both legs to jump.
Initially, the reward function focused on encouraging vertical movement while penalizing horizontal displacement, high contact force usage, and improper joint movements. Over time, the scaling factors for the rewards and penalties were gradually adjusted by changing the temperature parameters in the exponential scaling. These adjustments aimed to enhance the model's sensitivity to different movement behaviors.
For example, the vertical movement reward's temperature was reduced, leading to more precise rewards for positive vertical movements. Similarly, the horizontal displacement penalty was fine-tuned by modifying its temperature across iterations, either decreasing or increasing the penalty's impact on lateral movements. The contact force penalty evolved by decreasing its temperature to penalize excessive force usage more strongly, especially in the later iterations, making the task more sensitive to leg contact forces. Finally, the joint usage reward was refined by adjusting the temperature to either encourage or discourage certain joint behaviors, with more focus on leg extension and contraction patterns.
Overall, the changes primarily revolved around adjusting the sensitivity of different components, refining the balance between rewards and penalties to better align the humanoid's behavior with the desired jumping performance.
\begin{tcolorbox}[breakable, colback=gray!5!white, colframe=gray!75!black, title=HumanoidJump Task: Reward Function of Iteration 1]
\begin{lstlisting}[language=Python]
def compute_reward(torso_position: torch.Tensor, prev_torso_position: torch.Tensor, velocity_world: torch.Tensor, 
                   right_leg_contact_force: torch.Tensor, left_leg_contact_force: torch.Tensor, dof_pos: torch.Tensor) -> Tuple[torch.Tensor, Dict[str, torch.Tensor]]:
    # Ensure all tensors are on the same device
    device = torso_position.device

    # Compute vertical torso movement reward
    vertical_movement = torso_position[:, 2] - prev_torso_position[:, 2]
    vertical_movement_reward = torch.clamp(vertical_movement, min=0.0)  # Reward positive vertical movement
    vertical_movement_reward = torch.exp(vertical_movement_reward / 0.1)  # Use exponential scaling with temperature
    
    # Compute horizontal displacement penalty
    horizontal_displacement = torch.sum(torch.abs(torso_position[:, :2] - prev_torso_position[:, :2]), dim=-1)
    horizontal_displacement_penalty = torch.exp(-horizontal_displacement / 0.1)  # Penalize large movements with temperature

    # Compute leg forces usage reward
    contact_force_usage = torch.sum(torch.abs(right_leg_contact_force) + torch.abs(left_leg_contact_force), dim=-1)
    contact_force_usage_penalty = torch.exp(-contact_force_usage / 10.0)  # Penalize high contact force usage with temperature

    # Compute joint usage reward (encourages proper leg extension and contraction)
    leg_joints_indices = torch.tensor([6, 7, 8, 12, 13, 14], device=device)  # Indices of leg joints 
    leg_joint_usage = torch.mean(dof_pos[:, leg_joints_indices], dim=-1)
    leg_joint_usage_reward = torch.exp(-torch.abs(leg_joint_usage) / 0.1)  # Encourage movements from neutral position

    # Sum all rewards and penalties
    total_reward = vertical_movement_reward + horizontal_displacement_penalty + contact_force_usage_penalty + leg_joint_usage_reward

    # Create a dictionary for individual reward components
    reward_components = {
        'vertical_movement_reward': vertical_movement_reward,
        'horizontal_displacement_penalty': horizontal_displacement_penalty,
        'contact_force_usage_penalty': contact_force_usage_penalty,
        'leg_joint_usage_reward': leg_joint_usage_reward
    }
    
    return total_reward, reward_components
\end{lstlisting}
\end{tcolorbox}

\begin{tcolorbox}[breakable, colback=gray!5!white, colframe=gray!75!black, title= HumanoidJump Task: Reward Function of Iteration 2]
\begin{lstlisting}[language=Python]
def compute_reward(
    torso_position: torch.Tensor,
    prev_torso_position: torch.Tensor,
    velocity_world: torch.Tensor,
    right_leg_contact_force: torch.Tensor,
    left_leg_contact_force: torch.Tensor,
    dof_pos: torch.Tensor
) -> Tuple[torch.Tensor, Dict[str, torch.Tensor]]:
    # Ensure all tensors are on the same device
    device = torso_position.device

    # Compute vertical torso movement reward
    vertical_movement = torso_position[:, 2] - prev_torso_position[:, 2]
    vertical_movement_reward = torch.clamp(vertical_movement, min=0.0)  # Reward positive vertical movement
    temperature_vertical = 0.05
    vertical_movement_reward = torch.exp(vertical_movement_reward / temperature_vertical)  # Use exponential scaling

    # Compute horizontal displacement penalty
    horizontal_displacement = torch.sum(torch.abs(torso_position[:, :2] - prev_torso_position[:, :2]), dim=-1)
    temperature_horizontal = 0.1
    horizontal_displacement_penalty = torch.exp(-horizontal_displacement / temperature_horizontal)  # Penalize large movements with temperature

    # Compute leg forces usage reward
    contact_force_usage = torch.sum(torch.abs(right_leg_contact_force) + torch.abs(left_leg_contact_force), dim=-1)
    temperature_force = 1.0
    contact_force_usage_penalty = torch.exp(-contact_force_usage / temperature_force)  # Penalize high contact force usage with temperature

    # Compute joint usage reward (encourages proper leg extension and contraction)
    leg_joints_indices = torch.tensor([6, 7, 8, 12, 13, 14], device=device)  # Indices of leg joints 
    leg_joint_usage = torch.mean(dof_pos[:, leg_joints_indices], dim=-1)
    temperature_joints = 0.05
    leg_joint_usage_reward = torch.exp(-torch.abs(leg_joint_usage) / temperature_joints)  # Encourage movements from the neutral position

    # Sum all rewards and penalties
    total_reward = vertical_movement_reward + horizontal_displacement_penalty + contact_force_usage_penalty + leg_joint_usage_reward

    # Create a dictionary for individual reward components
    reward_components = {
        'vertical_movement_reward': vertical_movement_reward,
        'horizontal_displacement_penalty': horizontal_displacement_penalty,
        'contact_force_usage_penalty': contact_force_usage_penalty,
        'leg_joint_usage_reward': leg_joint_usage_reward
    }

    return total_reward, reward_components
\end{lstlisting}
\end{tcolorbox}

\begin{tcolorbox}[breakable, colback=gray!5!white, colframe=gray!75!black, title=HumanoidJump Task: Reward Function of Iteration 3]
\begin{lstlisting}[language=Python]
def compute_reward(
    torso_position: torch.Tensor,
    prev_torso_position: torch.Tensor,
    velocity_world: torch.Tensor,
    right_leg_contact_force: torch.Tensor,
    left_leg_contact_force: torch.Tensor,
    dof_pos: torch.Tensor
) -> Tuple[torch.Tensor, Dict[str, torch.Tensor]]:
    # Ensure all tensors are on the same device
    device = torso_position.device

    # Compute vertical torso movement reward
    vertical_movement = torso_position[:, 2] - prev_torso_position[:, 2]
    vertical_movement_reward = torch.clamp(vertical_movement, min=0.0)  # Reward positive vertical movement
    temperature_vertical = 0.05
    vertical_movement_reward = torch.exp(vertical_movement_reward / temperature_vertical)  # Use exponential scaling

    # Compute horizontal displacement penalty
    horizontal_displacement = torch.sum(torch.abs(torso_position[:, :2] - prev_torso_position[:, :2]), dim=-1)
    temperature_horizontal = 0.05  # Adjusted temperature for finer tuning
    horizontal_displacement_penalty = torch.exp(-horizontal_displacement / temperature_horizontal)  # Penalize large movements

    # Compute leg forces usage reward
    contact_force_usage = torch.sum(torch.abs(right_leg_contact_force) + torch.abs(left_leg_contact_force), dim=-1)
    temperature_force = 5.0  # Adjusted to make contact force usage more noticeable
    contact_force_usage_penalty = torch.exp(-contact_force_usage / temperature_force)  # Penalize high contact force usage

    # Compute joint usage reward (encourages proper leg extension and contraction)
    leg_joints_indices = torch.tensor([6, 7, 8, 12, 13, 14], device=device)  # Indices of leg joints 
    leg_joint_usage = torch.mean(dof_pos[:, leg_joints_indices], dim=-1)
    temperature_joints = 0.05
    leg_joint_usage_reward = torch.exp(-torch.abs(leg_joint_usage) / temperature_joints)  # Encourage movements from the neutral position

    # Sum all rewards and penalties
    total_reward = vertical_movement_reward + horizontal_displacement_penalty + contact_force_usage_penalty + leg_joint_usage_reward

    # Create a dictionary for individual reward components
    reward_components = {
        'vertical_movement_reward': vertical_movement_reward,
        'horizontal_displacement_penalty': horizontal_displacement_penalty,
        'contact_force_usage_penalty': contact_force_usage_penalty,
        'leg_joint_usage_reward': leg_joint_usage_reward
    }

    return total_reward, reward_components
\end{lstlisting}
\end{tcolorbox}

\begin{tcolorbox}[breakable, colback=gray!5!white, colframe=gray!75!black, title=HumanoidJump Task: Reward Function of Iteration 4]
\begin{lstlisting}[language=Python]
def compute_reward(
    torso_position: torch.Tensor,
    prev_torso_position: torch.Tensor,
    velocity_world: torch.Tensor,
    right_leg_contact_force: torch.Tensor,
    left_leg_contact_force: torch.Tensor,
    dof_pos: torch.Tensor
) -> Tuple[torch.Tensor, Dict[str, torch.Tensor]]:
    # Ensure all tensors are on the same device
    device = torso_position.device

    # Compute vertical torso movement reward
    vertical_movement = torso_position[:, 2] - prev_torso_position[:, 2]
    vertical_movement_reward = torch.clamp(vertical_movement, min=0.0)  # Reward positive vertical movement
    temperature_vertical = 0.04  # Adjusted temperature for better sensitivity
    vertical_movement_reward = torch.exp(vertical_movement_reward / temperature_vertical)  # Use exponential scaling

    # Compute horizontal displacement penalty
    horizontal_displacement = torch.sum(torch.abs(torso_position[:, :2] - prev_torso_position[:, :2]), dim=-1)
    temperature_horizontal = 0.1  # Increased temperature to penalize horizontal movement more 
    horizontal_displacement_penalty = torch.exp(-horizontal_displacement / temperature_horizontal)  # Penalize large movements

    # Compute leg forces usage reward
    contact_force_usage = torch.sum(torch.abs(right_leg_contact_force) + torch.abs(left_leg_contact_force), dim=-1)
    temperature_force = 0.1  # Significantly increase sensitivity to contact forces
    contact_force_usage_penalty = torch.exp(-contact_force_usage / temperature_force)  # Penalize high contact force usage

    # Compute joint usage reward (encourages proper leg extension and contraction)
    leg_joints_indices = torch.tensor([6, 7, 8, 12, 13, 14], device=device)  # Indices of leg joints 
    leg_joint_usage = torch.mean(dof_pos[:, leg_joints_indices], dim=-1)
    temperature_joints = 0.02  # Adjusted for joint usage sensitivity
    leg_joint_usage_reward = torch.exp(-torch.abs(leg_joint_usage) / temperature_joints)  # Encourage movements from the neutral position

    # Sum all rewards and penalties
    total_reward = vertical_movement_reward + horizontal_displacement_penalty + contact_force_usage_penalty + leg_joint_usage_reward

    # Create a dictionary for individual reward components
    reward_components = {
        'vertical_movement_reward': vertical_movement_reward,
        'horizontal_displacement_penalty': horizontal_displacement_penalty,
        'contact_force_usage_penalty': contact_force_usage_penalty,
        'leg_joint_usage_reward': leg_joint_usage_reward
    }

    return total_reward, reward_components
\end{lstlisting}
\end{tcolorbox}

\begin{tcolorbox}[breakable, colback=gray!5!white, colframe=gray!75!black, title=HumanoidJump Task: Reward Function of Iteration 5]
\begin{lstlisting}[language=Python]
def compute_reward(
    torso_position: torch.Tensor,
    prev_torso_position: torch.Tensor,
    velocity_world: torch.Tensor,
    right_leg_contact_force: torch.Tensor,
    left_leg_contact_force: torch.Tensor,
    dof_pos: torch.Tensor
) -> Tuple[torch.Tensor, Dict[str, torch.Tensor]]:
    # Ensure all tensors are on the same device
    device = torso_position.device

    # Compute vertical torso movement reward
    vertical_movement = torso_position[:, 2] - prev_torso_position[:, 2]
    vertical_movement_reward = torch.clamp(vertical_movement, min=0.0)  # Reward positive vertical movement
    temperature_vertical = 0.04  # Adjusted temperature for better sensitivity
    vertical_movement_reward = torch.exp(vertical_movement_reward / temperature_vertical)  # Use exponential scaling

    # Compute horizontal displacement penalty
    horizontal_displacement = torch.sum(torch.abs(torso_position[:, :2] - prev_torso_position[:, :2]), dim=-1)
    temperature_horizontal = 0.05  # Decreased temperature for more sensitivity
    horizontal_displacement_penalty = torch.exp(-horizontal_displacement / temperature_horizontal)  # Penalize large movements

    # Compute leg forces usage penalty (Rewritten to reduce contact force)
    contact_force_usage = torch.sum(torch.abs(right_leg_contact_force) + torch.abs(left_leg_contact_force), dim=-1)
    temperature_force = 0.5  # Adjusted to penalize contact force usage
    contact_force_usage_penalty = torch.exp(-contact_force_usage / temperature_force)  # Penalize high contact force usage

    # Compute joint usage reward (encourages proper leg extension and contraction)
    leg_joints_indices = torch.tensor([6, 7, 8, 12, 13, 14], device=device)  # Indices of leg joints 
    leg_joint_usage = torch.mean(torch.abs(dof_pos[:, leg_joints_indices]), dim=-1)
    temperature_joints = 0.02  # Adjusted for joint usage sensitivity
    leg_joint_usage_reward = torch.exp(-leg_joint_usage / temperature_joints)  # Encourage movements from the neutral position

    # Sum all rewards and penalties
    total_reward = vertical_movement_reward + horizontal_displacement_penalty + contact_force_usage_penalty + leg_joint_usage_reward

    # Create a dictionary for individual reward components
    reward_components = {
        'vertical_movement_reward': vertical_movement_reward,
        'horizontal_displacement_penalty': horizontal_displacement_penalty,
        'contact_force_usage_penalty': contact_force_usage_penalty,
        'leg_joint_usage_reward': leg_joint_usage_reward
    }

    return total_reward, reward_components
\end{lstlisting}
\end{tcolorbox}

\begin{tcolorbox}[breakable, colback=gray!5!white, colframe=gray!75!black, title=HumanoidJump Task: Reward Function of Iteration 6]
\begin{lstlisting}[language=Python]
def compute_reward(
    torso_position: torch.Tensor,
    prev_torso_position: torch.Tensor,
    velocity_world: torch.Tensor,
    right_leg_contact_force: torch.Tensor,
    left_leg_contact_force: torch.Tensor,
    dof_pos: torch.Tensor
) -> Tuple[torch.Tensor, Dict[str, torch.Tensor]]:
    # Ensure all tensors are on the same device
    device = torso_position.device

    # Compute vertical torso movement reward
    vertical_movement = torso_position[:, 2] - prev_torso_position[:, 2]
    vertical_movement_reward = torch.clamp(vertical_movement, min=0.0)  # Reward positive vertical movement
    temperature_vertical = 0.03  # Fine-tuned temperature for better sensitivity
    vertical_movement_reward = torch.exp(vertical_movement_reward / temperature_vertical)  # Use exponential scaling

    # Compute horizontal displacement penalty
    horizontal_displacement = torch.sum(torch.abs(torso_position[:, :2] - prev_torso_position[:, :2]), dim=-1)
    temperature_horizontal = 0.04  # Decreased temperature for more sensitivity
    horizontal_displacement_penalty = torch.exp(-horizontal_displacement / temperature_horizontal)  # Penalize large movements

    # Compute leg forces usage penalty (encourage minimal contact force)
    contact_force_usage = torch.sum(torch.abs(right_leg_contact_force) + torch.abs(left_leg_contact_force), dim=-1)
    temperature_force = 0.5  # Adjusted to penalize contact force usage
    contact_force_usage_penalty = torch.exp(-contact_force_usage / temperature_force)  # Penalize high contact force usage

    # Compute joint usage reward (encourages proper leg extension and contraction)
    leg_joints_indices = torch.tensor([6, 7, 8, 12, 13, 14], device=device)  # Indices of leg joints 
    leg_joint_usage = torch.mean(torch.abs(dof_pos[:, leg_joints_indices]), dim=-1)
    temperature_joints = 0.02  # Fine-tuned for joint usage sensitivity
    leg_joint_usage_reward = torch.exp(-torch.abs(leg_joint_usage) / temperature_joints)  # Encourage movements from the neutral position

    # Sum all rewards and penalties
    total_reward = vertical_movement_reward + horizontal_displacement_penalty + contact_force_usage_penalty + leg_joint_usage_reward

    # Create a dictionary for individual reward components
    reward_components = {
        'vertical_movement_reward': vertical_movement_reward,
        'horizontal_displacement_penalty': horizontal_displacement_penalty,
        'contact_force_usage_penalty': contact_force_usage_penalty,
        'leg_joint_usage_reward': leg_joint_usage_reward
    }

    return total_reward, reward_components
\end{lstlisting}
\end{tcolorbox}


\end{document}